\documentclass[sigconf]{acmart}

\AtBeginDocument{%
  \providecommand\BibTeX{{%
    \normalfont B\kern-0.5em{\scshape i\kern-0.25em b}\kern-0.8em\TeX}}}

\copyrightyear{2023}
\acmYear{2023}
\setcopyright{acmlicensed}\acmConference[WWW '23]{Proceedings of the ACM
Web Conference 2023}{May 1--5, 2023}{Austin, TX, USA}
\acmBooktitle{Proceedings of the ACM Web Conference 2023 (WWW '23), May
1--5, 2023, Austin, TX, USA}
\acmPrice{15.00}
\acmDOI{10.1145/3543507.3583528}
\acmISBN{978-1-4503-9416-1/23/04}




\usepackage{amstext}
\usepackage{amsthm}

\usepackage{amssymb}
\usepackage{bm}
\usepackage{wrapfig}
\usepackage[utf8]{inputenc}
\usepackage{subfigure}
\usepackage{enumitem}
\usepackage{url}
\usepackage[ruled,linesnumbered]{algorithm2e}
\usepackage{color}

\newtheorem*{theorem*}{Theorem}
\newtheorem{theorem}{Theorem}[section]

\newtheorem{example}[theorem]{Example}
\newtheorem{proposition}[theorem]{Proposition}
\newtheorem{definition}[theorem]{Definition}

\begin{document}

\title{Near-Optimal Experimental Design Under the Budget Constraint in Online Platforms}


\author{Yongkang Guo}
\affiliation{%
  \institution{Peking University}
  \country{}}
\email{yongkang_guo@pku.edu.cn}

\author{Yuan Yuan}
\affiliation{%
  \institution{Purdue University}
  \country{}}
\email{yuanyuan@purdue.edu}

\author{Jinshan	Zhang}
\affiliation{%
  \institution{Zhejiang University}
  \country{}}
\email{zhangjinshan@zju.edu.cn}

\author{Yuqing Kong}
\affiliation{%
  \institution{Peking University}
  \country{}}
\email{yuqing.kong@pku.edu.cn}

\author{Zhihua Zhu}
\affiliation{%
  \institution{Tencent Technology (Shenzhen) Co., Ltd.}
  \country{}}
\email{zhihuazhu@tencent.com}

\author{Zheng Cai}
\affiliation{%
  \institution{Tencent Technology (Shenzhen) Co., Ltd.}
  \country{}}
\email{zhengcai@tencent.com}

\begin{abstract}
A/B testing, or controlled experiments, is the gold standard approach to causally compare the performance of algorithms on online platforms. 
However, conventional Bernoulli randomization in A/B testing faces many challenges such as spillover and carryover effects. 
Our study focuses on another challenge, especially for A/B testing on two-sided platforms -- budget constraints.
%
Buyers on two-sided platforms often have limited budgets, where the conventional A/B testing may be infeasible to be applied, partly because two variants of allocation algorithms may conflict and lead some buyers to exceed their budgets if they are implemented simultaneously. 
%
We develop a model to describe two-sided platforms where buyers have limited budgets. 
We then provide an optimal experimental design that guarantees small bias and minimum variance. 
Bias is lower when there is more budget and a higher supply-demand rate. We test our experimental design on both synthetic data and real-world data, which verifies the theoretical results and shows our advantage compared to Bernoulli randomization.
\end{abstract}

\begin{CCSXML}
<ccs2012>
   <concept>
       <concept_id>10002944.10011123.10011673</concept_id>
       <concept_desc>General and reference~Design</concept_desc>
       <concept_significance>500</concept_significance>
       </concept>
   <concept>
       <concept_id>10002944.10011123.10011131</concept_id>
       <concept_desc>General and reference~Experimentation</concept_desc>
       <concept_significance>500</concept_significance>
       </concept>
   <concept>
       <concept_id>10002951.10003260.10003272.10003275</concept_id>
       <concept_desc>Information systems~Display advertising</concept_desc>
       <concept_significance>300</concept_significance>
       </concept>
   <concept>
       <concept_id>10002944.10011123.10011674</concept_id>
       <concept_desc>General and reference~Performance</concept_desc>
       <concept_significance>500</concept_significance>
       </concept>
 </ccs2012>
\end{CCSXML}

\ccsdesc[500]{General and reference~Design}
\ccsdesc[500]{General and reference~Experimentation}
\ccsdesc[300]{Information systems~Display advertising}
\ccsdesc[500]{General and reference~Performance}

\keywords{A/B testing, online platforms, experimental design}

\maketitle

\section{Introduction}
\label{sec:intro}


A/B testing, also known as controlled experiments has been used as the gold standard approach to compare different algorithms on online platforms, due to its wide range of application scenarios and simple implementation \cite{kohavi2009controlled,kohavi2011unexpected,kohavi2014seven,ha2020counterfactual,bhagat2018buy}. In brief, A/B testing randomly divides a group of units into two subgroups, which often named as treatment group and control group. Then the platform allocates different variants of algorithms to units according to their subgroup and then receives the metrics of interest as an estimator for the effect of corresponding variants. Proper use of A/B testing reduces both potential financial and time losses for the platform caused by poor algorithms.
Despite the power of A/B testing, it may not perfectly obtain the precise estimation of the true effect of interest. The existence of spillover effects \cite{kwark2021spillover}, carryover effects \cite{bojinov2020design}, and the mismatch between short-term and long-term treatment effects \cite{dmitriev2016pitfalls} remain the major challenges to the credibility of A/B testing results. 

Our study focuses on online two-sided platforms with budget constraints. For example, on advertisement platforms where the two sides are advertisers and platform users respectively, each advertiser may have limited budgets and can only distribute their advertisements to certain users. The budget constraint is prevalent on other two-sided platforms, such as ride-hailing platforms where drivers can only take a limited number of passengers, or online shopping platforms where merchants have limited inventory. 


The platform can create advanced algorithms to address budget constraints, but these constraints pose challenges for standard A/B testing. Consider a scenario with 4 items and 2 buyers, where different experiment variations determine how items are assigned to buyers. Each buyer can only receive 2 items due to budget limitations, for example, an advertiser can only purchase a limited number of user views. Variant A assigns items 1 and 2 to buyer 1, and items 3 and 4 to buyer 2; variant B assigns items 1 and 3 to buyer 1, and items 2 and 4 to buyer 2. Running a standard A/B test on items results in a chance of 1/4 for each buyer to receive more than the 2 item budget limit. Similarly, running the test on buyers can lead to the simultaneous assignment of items 2 and 3 to both buyers. Budget constraints can make standard experimental design impractical.

A highly relevant challenge to A/B testing is ``interference'', or ``spillover effect'' \cite{blake2014marketplace,johari2022experimental,puelz2019graph,yuan2021causal,yuantwo}. 
It means that a unit outcome should not be affected by other units' treatment assignments according to Rubin's potential outcome model \cite{rubin2005causal}. 
Strictly speaking, interference is not the major concern in the motivating advertisement platform example: once the treatment assignments are carefully designed, one buyer's outcome should not be affected by other buyers' treatment assignments but only depends on the features of users who view his advertisement. 
However, our budget constraint problem shares similarities with the interference problem in terms of the need for improved experimental design and estimator construction. The connection between our work and interference will be discussed in the Related Work section.


In this paper, we address the budget constraint for A/B testing on two-sided platforms. 
We develop a model to describe any two-sided platforms with one-to-many relationships and one side has limited budgets. We then provide a near-optimal experimental design that guarantees small bias and minimum variance to compare two potential allocations. 
We solve the experimental design problem by convex optimization and extend it to the online setting.
We then demonstrate our approach with both synthetic and real-world data. As Bernoulli randomization is infeasible in our setting, we provide a modified version as a benchmark. 
Our results demonstrate that our algorithm outperforms the modified Bernoulli randomization, achieving a $20\%$ reduction in mean squared error (MSE). Furthermore, the performance of our algorithm is further improved under certain conditions such as a high ratio of budget to total costs and high similarity between two potential allocations.

Below is a summary of the contributions of this paper:
\begin{itemize}
    \item We formulate the budget constraint problem for A/B testing on two-sided platforms.
    \item We propose an experimental design to address this budget constraint problem, with bias upper bounded by $O(\left(m/n\right)^{2/3})$ and minimum variance.
    \item We propose a polynomial online algorithm, which takes into account the real-world scenario that items are not immediately known but come sequentially. Its computation complexity is $O(m)$ times the offline case. 
    \item We combine real-world data from Tencent and synthetic potential outcomes to demonstrate the validity of our algorithms. 
\end{itemize}

\section{Related Work}
\paragraph{Interference model}
Our study is highly relevant but different from the interference literature, such as~\cite{chen2015online,muchnik2013social,ugander2013graph,pouget2019testing,holtz2020reducing,aronow2017estimating}. We mention a few of the most relevant studies here. 
\citet{basse2016randomization} investigate the impact of interference on auction experiments with limited budgets, assuming that the experiment directly affects the potential bids of the buyers and then affects potential payments due to auction mechanisms. In contrast, our study does not consider background auction mechanisms and assume the payments are fixed. \citet{liu2021trustworthy} provide a budget-split mechanism to correct the cannibalization bias. They prove unbiasedness under limited interference and stable system assumptions. Our study instead focuses on a more detailed setting for each buyer and the experiment design in an online setting. \citet{johari2022experimental} analyze the bias in a two-sided market and introduces a two-sided randomization design based on a continuous time Markov model. They consider long-term stable states where budgets may recover, whereas our study focuses on the short-term states where the budget is fixed.

\paragraph{Budget management}
Budget management has been widely studied as a traditional optimization problem such as online matching problem \cite{mahdian2011online,feldman2010online,devanur2009adwords} or Adwords problem with concave return (APCR) \cite{chen2015dynamic,devanur2012online,huang2020adwords}. They often assume the price of each advertisement is pre-determined, which is similar to our model. But they aim to directly maximize the metric of interest while we want to minimize the variance of our matching experiment design. 
Some other studies focus on the game theory perspective~\cite{balseiro2015repeated, paes2016field,arnosti2016adverse}. They calculate the equilibrium in repeated auctions under budget constraints or design a mechanism with desirable properties. They mainly assume that advertisers strategically respond to different auction mechanisms. We focus on comparing two variants under budget constraints rather than designing the optimal mechanism and leave the strategies of advertisers as future direction.

\paragraph{Online optimization}
There has been great interest in online problems of random order adversaries. Solving packing linear programming is the most relevant work to this paper~\cite{agrawal2014dynamic,molinaro2014geometry,agrawal2014fast,kesselheim2018primal,gupta2016experts}. In packing linear programmings (LPs), there are many linear constraints representing the volumes of items placed in a package will not exceed its capacity, which is similar to the budget. \citet{agrawal2014dynamic} propose an online algorithm based on the linear objective and \citet{zhang2022online} extend it to more general objectives. We migrate the basic idea of solving proportional sub-problems for every item to our online algorithm and replace the objective function.

\section{Model}
In this section, we introduce our model for two-sided platforms under budget constraints and design experiments to examine the difference between two potential allocations. Here we take advertisement platforms as an example, though our setting can be generalized to other scenarios such as e-commerce and recommendation systems.

Suppose there are $m$ items and $n$ buyers. We focus on the one-to-many platforms so that the items will be allocated to the buyers and one item can be assigned to at most one buyer. Thus items and buyers form a bipartite graph $G$ where the edge $e_{ij}$ between item $i$ and buyer $j$ indicates item $i$ is allocated to buyer $j$. The platform has several ways to influence the allocation and different allocations will bring different utilities. For example, on advertisement platforms, buyers are the advertisers and items are user views.

We set budget constraints on the buyers' side. Each buyer $j$ needs to report their budget $b_j$ before the allocation, which represents the total amount the buyer can spend on one or multiple items. This reflects the scenario where advertisers do not want to exceed their budget when purposing exposures to their advertisements on a platform.

We assume that item $i$ will bring utility $u_{ij}$ and cost $c_{ij}$ for buyer $j$. They can be considered as the utility and cost of edge $e_{ij}$. On advertisement platforms, the utility summarizes the metrics of interest for both platforms and buyers. For the platforms, the utility is simply the revenue, which is the cost of buyers in advertisement platforms. For the buyers, the utility includes targets such as clicks or conversions, which can be generalized as the revenue brought by user views. The total utility may be a linear combination of these two aspects. 
We suppose the distribution of $u_{ij}$ is known, but the actual value can only be observed by some feedback after the allocation is realized. Thus, even though we can roughly estimate the outcome of an algorithm through offline experiments, we still want to determine its actual performance through online experiments. On advertisement platforms, the feedback may be the actual click rates, which can be seen after advertisements are impressed to users. Meanwhile,
we suppose $c_{ij}$ is fully known. Usually, $c_{ij}$ is decided by some auction mechanisms such as the first price auctions or second price auctions. So it only depends on the bids from the buyers. Thus, it is known to the platform before we start allocation.
For simplicity, we also denote $\mathbf{b}=(b_1,\cdots,b_n),\mathbf{U}=(u_{ij})_{i,j},\mathbf{C}=(c_{ij})_{ij}$ as their matrix forms. 

Additionally, our model is applicable to e-commerce platforms such as eBay, where the quantity of a particular product can be considered as a form of generalized budget.  In this scenario, the ``buyer'' is the product and the ``item'' is the customer. $c_{ij}$ is the number of product $j$ purchased by customer $i$ and typically $c_{ij}=1$. The utilities in this case may be the revenue generated by the platform, usually in the form of agency fees, or they can reflect the price of the product when customers engage in bidding. Furthermore, our model can accommodate common linear constraints observed in practical settings, such as a limit on the number of items that can be purchased by a single buyer.

\subsection{Allocation Matrix and Objective}
Platforms may use an allocation algorithm between buyers and items. The allocation algorithm aims to allocate each item to a buyer while not allowing exceeding buyers' budgets.
To represent the realization of an allocation, we define an allocation matrix
$\mathbf{W}\in \{0,1\}^{m\times n}$, where $w_{ij}=1$ if the item $i$ is allocated to buyer $j$, otherwise $w_{ij}=0$. Since the item cannot be duplicated, $\sum_j w_{ij}\le 1$ for any $i$. When $\sum_j w_{ij}=0$, it means this item is aborted, i.e., it will not be allocated to any buyer. The allocation matrix is actually the adjacency matrix of the corresponding bipartite graph. We use $\mathbf{w}_i$ to represent the $i$-th row of $\mathbf{W}$. 

Imagine that the platform recently proposed an alternative allocation algorithm that is represented by $\mathbf{W}^1$, while the original allocation algorithm is represented by $\mathbf{W}^0$. The platform wants to assess if the newly proposed allocation outperforms the original one. 
However, it is challenging to feasibly impose two allocation approaches for the same pool of buyers or items. As the example in Section \ref{sec:intro} shows, with the existence of budget constraints, randomly assigning items to one of the allocations may lead some buyers to exceed their budgets; similarly, randomly assigning buyers to one of the allocations may lead one item being allocated to multiple users. 

The goal of the platform is to estimate the difference in the sum of utilities resulting from two allocations, or the so-called total treatment effect (TTE):
$$\tau=\sum_{i,j} u_{ij}w_{ij}^1-\sum_{i,j} u_{ij}w_{ij}^0$$

It's worth mentioning that we may want to relax the allocation matrix from $\{0,1\}^{m\times n}$ to $[0,1]^{m\times n}$. Imagine in an e-commerce platform customers can see multiple products in their recommendation lists and may click and buy them with a probability distribution based on some behavior models. Instead of using a one-hot vector $\mathbf{w}_i\in\{0,1\}^n$ for any item $i$, we replace it with a distribution vector $\mathbf{w}_i\in [0,1]^n$ such that $\sum_{j=1}^n w_{ij}\le 1$. In that situation, the cost is not simply the sum of $c_{ij}w_{ij}$, which calls for other variables such as the position of items in the list. In this paper, we mainly consider the one-hot setting.

\subsection{Experimental Design}
To estimate TTE, we need to perform a special experiment to collect data about $\mathbf{U}$.
Recall that the exact value of $u_{ij}$ can only be observed after an allocation is realized. Therefore, we cannot observe the whole matrix $\mathbf{U}$. Instead, we only observe at most one data point for each item, i.e. each row of $\mathbf{U}$. So we define the observation matrix to show all the values observed for estimation.
\begin{definition}[observation matrix]
When the realized allocation matrix is $\mathbf{W}$, the corresponding observation matrix is $\mathbf{O}$ where $o_{ij}=w_{ij}u_{ij}$ for any $i,j$.
\end{definition}

An experimental design is a distribution $p(\mathbf{W})$  over all allocations. Then we will sample an allocation $\mathbf{W}$ from the distribution and obtain the observation matrix. Now our goal becomes finding a well-performing distribution $p(\mathbf{W})$ to estimate TTE.

Without budget constraint, Bernoulli randomization on item level, which means we randomly allocate the item according to the old algorithm or new algorithm, can perfectly solve this problem. However, the budget constraints require that the cost of any buyer should not exceed their budget. Thus, not every allocation can be feasible and this will restrict distribution $p(\mathbf{W})$. Using the language of the graph, the sum of the edge weight of each node will not exceed its budget. We describe this condition as budget-satisfying allocation.
\begin{definition}[budget-satisfying allocation]
We define an 
allocation $\mathbf{W}$ is budget-satisfying if for any buyer $j$, $\sum_i c_{ij}w_{ij}\leq b_j$. The set of all budget-satisfying allocations is denoted by $\mathcal{S}$. 
\end{definition}

We can suppose that $\mathbf{W}^0,\mathbf{W}^1$ are both budget-satisfying.

To ensure any allocation $\mathbf{W}$ sampled from $p(\cdot)$ satisfies the budget constraint, the support of $p(\cdot)$ should be $\mathcal{S}$. However, enumerating $\mathcal{S}$ completely is very hard even when $c_{ij}$ are integers and can be seen as a knapsack count problem, which is a typical $\#P$ problem and can only be solved by approximation algorithms. Thus it is unrealistic to directly design the distribution over $\mathcal{S}$. Here we will consider a simple but similar set of $\mathcal{S}$ to avoid the enumeration.

In detail, we first design the allocation distribution without considering budget constraints. Then we modify our allocation matrix sampled from the distribution to control the budget of each buyer and get a budget-satisfying allocation. 

For simplicity, we only consider the allocation distribution where the allocation of each item is independent. Suppose we allocate each item $i$ to buyer $j$ with probability $x_{ij}$, the the unmodified distribution is $p(\mathbf{W})=\Pi_{i,j:w_{ij}=1}x_{ij}$. We can define an experiment matrix to represent such distribution.

\begin{definition}[Experiment matrix]
Suppose we have a distribution $p(\mathbf{W})=\Pi_{i,j:w_{ij}=1}x_{ij}$, the corresponding experiment matrix $\mathbf{X}\in [0,1]^{m\times n}$.
\end{definition}
We will abuse the notation to consider $\mathbf{X}$ as the distribution. Notice that if $\sum_{j} x_{ij}< 1$, then there is a chance that item $i$ is aborted.

\begin{example}[Bernoulli randomization]
In conventional A/B testing (i.e., Bernoulli randomization), we randomly choose between $\mathbf{w}_i^1$ and $\mathbf{w}_i^0$ as the real allocation vector of item $i$. That is, item $i$ is either allocated according to the controlled allocation with probability $p$ or the treatment allocation with probability $1-p$. The corresponding experiment matrix is:
$$\mathbf{X}=p\mathbf{W}^1+(1-p)\mathbf{W}^0.$$
\end{example}

\paragraph{Modification} After obtaining the unmodified distribution, there are chances that we may sample a not budget-satisfying allocation as Example \ref{example:overspend} illustrates. So we have to convert an unmodified distribution to a distribution with support $\mathcal{S}$, which is called modification.

\begin{example}[Example of overspending]
Suppose there are $2k$ items and 2 buyers. If $c_{ij}=1$ for any $i,j$ and $b_1=b_2=n$. $W^1_{ij}$ satisfies that
\begin{equation*}
 W^1_{ij}=\begin{cases}
1 & 1\le i\le k\ and\ j=1\    \\
1 & k+1\le i\le 2k\ and\ j=2  \\
0 & otherwise
\end{cases} 
\end{equation*}

and $W_{ij}^0=1-W_{ij}^1$ for any $i,j$.

So both $\mathbf{W}^1$ and $\mathbf{W}^0$ 
are
budget-satisfying. However, if we use Bernoulli randomization, e.g.,  $\mathbf{W}=\frac{1}{2}(\mathbf{W}^1+\mathbf{W}^0)$. Only with probability $\frac{\binom{k}{2k}}{2^{2k}}$ the allocation matrix sampled from $\mathbf{W}$ is budget-satisfying, which happens only when each buyer is assigned with half of the items. When $k\rightarrow\infty$, for almost surely there exists one buyer who will exceed his budget.
\label{example:overspend}
\end{example}

It is natural that we can resample the allocation until we get a budget-satisfying allocation. But as Example \ref{example:overspend} shows, this probability can be negligible hence a waste of computational time. So we will not assign some of the items to control the cost of buyers, which is also called \textit{throttling} in some auction mechanisms. In math, it is a modification function $M(\mathbf{W})=\mathbf{W'}$, converting any allocation matrix $\mathbf{W}$ to a budget-satisfying allocation matrix $\mathbf{W'}$. There are several ways of throttling. We mainly consider sequential throttling and random throttling.

In sequential throttling, we just keep the smaller numbered item. In other words, we will throttle items from item $m$ to item $1$ until no buyer exceeds his budget. In random throttling, we first uniformly sample a permutation of the items and then use sequential throttling based on the new index. Example \ref{example:seq} and Example \ref{example:random} give the formal definition.

\begin{definition}[Sequential throttling]
Suppose the unmodified allocation matrix is $\mathbf{W}$. Then we define $i^*(j)$ is the largest index for buyer $j$ such that $\sum_{i=1}^{i^*} w_{ij}c_{ij}\le b_j$. We modify the matrix by $x'_{ij}=x_{ij}\vmathbb{1}(i\le i^*(j))$ where $\vmathbb{1}$ is the indicator function. The modified matrix is guaranteed to be budget-satisfying.
\label{example:seq}
\end{definition}

\begin{definition}[Random throttling]
Suppose the unmodified allocation matrix is $\mathbf{W}$. We sample a permutation $\Sigma(i)$ of $[m]$ uniformly and randomly. If $i^*(j)$ is the largest index for buyer $j$ such that $\sum_{\Sigma(i)\le i^*(j)} w_{ij}c_{ij}\le b_j$. We modify the matrix by $x'_{ij}=x_{ij}\vmathbb{1}(\Sigma(i)\le i^*(j))$.
\label{example:random}
\end{definition}

In Example \ref{example:overspend}, suppose now the buyer 1 is assigned with item $1,2,\cdots,k+1$. Since his budget is $k$, we have to choose one item and not assign it to buyer 1. In sequential throttling, we always choose item $k+1$ and in random throttling, we uniformly choose among item $1,2,\cdots,k+1$. Intuitively, the sequential throttling will be more unbalanced since the items at the end will hardly be allocated, while the random throttling can ensure the allocated probability of each item is similar and not too small. However, in the online setting, sequential throttling is more reasonable, since we cannot withdraw the previous allocation after the budget is exhausted. Whatever method we use, the modified allocation will be budget-satisfying. We can realize the final allocation matrix and obtain the observations.

\subsection{Estimator}
Next, we discuss how we construct the estimator for TTE. After the outcome $\mathbf{O}$ observed, we assume that $p_{ij}=\Pr_{\mathbf{W}\sim\mathbf{X}}(o_{ij}=u_{ij})=\Pr_{\mathbf{W}\sim\mathbf{X}}\left(M(\mathbf{W})_{ij}=1\right)$. It represents the actual probability that item $i$ is allocated to buyer $j$ under the modification function $M$ and experiment matrix $\mathbf{X}$. By the Horvitz–Thompson estimator an ideal estimator is 

$$\bar{\tau}(\mathbf{O})=\sum_{i,j} \frac{o_{ij}w^1_{ij}}{p_{ij}}-\sum_{i,j} \frac{o_{ij}w^0_{ij}}{p_{ij}}.$$ 

To ensure the estimator is well-defined, we define $0/0=0$. Notice that when $p_{ij}=0$, $o_{ij}$ is always $0$ too. To estimate $\tau$, we only need to pay attention to item-buyer pairs such that $w^1_{ij}>0$ and $w^0_{ij}>0$. The following proposition illustrates that as long as we put non-zero probability in those important pairs, then $\bar{\tau}$ is unbiased. We state the following proposition and defer the proof to Appendix A.

\begin{proposition}
If for any $i,j$ such that $w_{ij}^1+w_{ij}^0>0$, $x_{ij}>0$. Then $\mathbb{E}_{\mathbf{W}\sim \mathbf{X}}[\bar{\tau}(\mathbf{O})]=\tau$.
\label{prop:HTestimator}
\end{proposition}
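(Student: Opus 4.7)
The plan is to prove unbiasedness cell by cell using linearity of expectation and the standard Horvitz–Thompson identity. First I would rewrite
\[
\bar{\tau}(\mathbf{O}) = \sum_{i,j}(w^1_{ij}-w^0_{ij})\,\frac{o_{ij}}{p_{ij}}
= \sum_{i,j}(w^1_{ij}-w^0_{ij})\,\frac{u_{ij}M(\mathbf{W})_{ij}}{p_{ij}},
\]
using $o_{ij}=u_{ij}M(\mathbf{W})_{ij}$. By linearity, $\mathbb{E}[\bar{\tau}(\mathbf{O})] = \sum_{i,j}(w^1_{ij}-w^0_{ij})\,\mathbb{E}\!\left[u_{ij}M(\mathbf{W})_{ij}/p_{ij}\right]$, so it suffices to show that whenever $(i,j)$ contributes (i.e.\ $w^1_{ij}+w^0_{ij}>0$) the inner expectation equals $u_{ij}$; pairs with $w^1_{ij}=w^0_{ij}=0$ carry coefficient zero and are harmless under the $0/0=0$ convention regardless of whether $p_{ij}=0$.

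For each contributing pair, the Horvitz–Thompson identity is just
\[
\mathbb{E}\!\left[\tfrac{u_{ij}M(\mathbf{W})_{ij}}{p_{ij}}\right]
= \tfrac{u_{ij}}{p_{ij}}\,\mathbb{E}[M(\mathbf{W})_{ij}]
= \tfrac{u_{ij}}{p_{ij}}\cdot p_{ij} = u_{ij},
\]
provided $p_{ij}>0$. Summing over all $(i,j)$ with $w^1_{ij}+w^0_{ij}>0$ then gives $\sum_{i,j}(w^1_{ij}-w^0_{ij})u_{ij}=\tau$, as desired.

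The main obstacle is the implication $x_{ij}>0 \Rightarrow p_{ij}>0$ for every contributing pair, since the hypothesis is stated in terms of $x_{ij}$ (the pre-throttling probability) while the Horvitz–Thompson denominator uses $p_{ij}$ (the post-throttling probability). To close this gap I would use that $w^k_{ij}=1$ for some $k\in\{0,1\}$ and that $\mathbf{W}^k$ is budget-satisfying, which forces $c_{ij}\le\sum_{i'}c_{i'j}w^k_{i'j}\le b_j$. Then I exhibit a positive-probability sample path of $M$ that keeps edge $(i,j)$: under random throttling, conditional on $w_{ij}=1$ (probability $x_{ij}>0$) and on the permutation $\Sigma$ assigning $i$ the smallest rank among items allocated to $j$ (positive probability), the first time throttling considers buyer $j$ it sees only item $i$ with $c_{ij}\le b_j$ and retains it, so $M(\mathbf{W})_{ij}=1$. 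The sequential case is analogous, taking the event that no item with smaller index is assigned to buyer $j$. Hence $p_{ij}>0$, validating the Horvitz–Thompson step and completing the proof.
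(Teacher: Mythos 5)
Your proof is correct and follows essentially the same route as the paper's: linearity of expectation plus the per-cell Horvitz--Thompson identity, reduced to verifying that $x_{ij}>0\Rightarrow p_{ij}>0$ for every contributing pair. Your positivity step is in fact slightly more careful than the paper's: you explicitly invoke $c_{ij}\le b_j$ (from $\mathbf{W}^k$ being budget-satisfying), and under random throttling your ``$i$ ranked first'' event has probability at least $x_{ij}/m>0$ even if some other $x_{i'j}=1$, whereas the paper's event ``no other item is allocated to $j$'' (probability $\Pi_{i'\ne i}(1-x_{i'j})$) would degenerate to zero in that corner case.
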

In practice, The Horvitz–Thompson estimator is often limited by excessive variance. So it is wise to use the Hajek estimator to obtain less variance at the expense of a small bias \cite{khan2021adaptive,eckles2017design}. \footnote{The formula of Hajek estimator in our setting is $$m\left(\sum_{i,j} \frac{o_{ij}w^1_{ij}}{p_{ij}u_{ij}}\right)^{-1}\sum_{i,j} \frac{o_{ij}w^1_{ij}}{p_{ij}}-m\left(\frac{o_{ij}w^0_{ij}}{p_{ij}u_{ij}}\right)^{-1}\sum_{i,j} \frac{o_{ij}w^0_{ij}}{p_{ij}}.$$}

However, $p_{ij}$ is difficult to compute since it relies on not only the probability of overspending but also the way of throttling. We can approximate the probability by replication, a method based on Monte Carlo \cite{fattorini2006applying, aronow2013estimating}, but it's not always feasible due to the computation time. So we provide another simple alternative.  Intuitively, when item $i$ is not throttled by the throttling function with high probability, then $p_{ij}$ is close to $x_{ij}$. We replace $p_{ij}$ with $x_{ij}$:

$$\hat{\tau}(\mathbf{O})=\sum_{i,j} \frac{o_{ij}w^1_{ij}}{x_{ij}}-\sum_{i,j} \frac{o_{ij}w^0_{ij}}{x_{ij}}.$$

We can also create a Hajek-like estimator by replacing $p_{ij}$ with $x_{ij}$ in the formula. However, we do not discuss it in this paper. Figure \ref{fig:flow1} summarizes our model. In Section \ref{sec:bias} we just use present tense analysis of the bias and variance of $\hat{\tau}$.

\begin{figure}[!ht]\centering
  \includegraphics[scale=0.3]{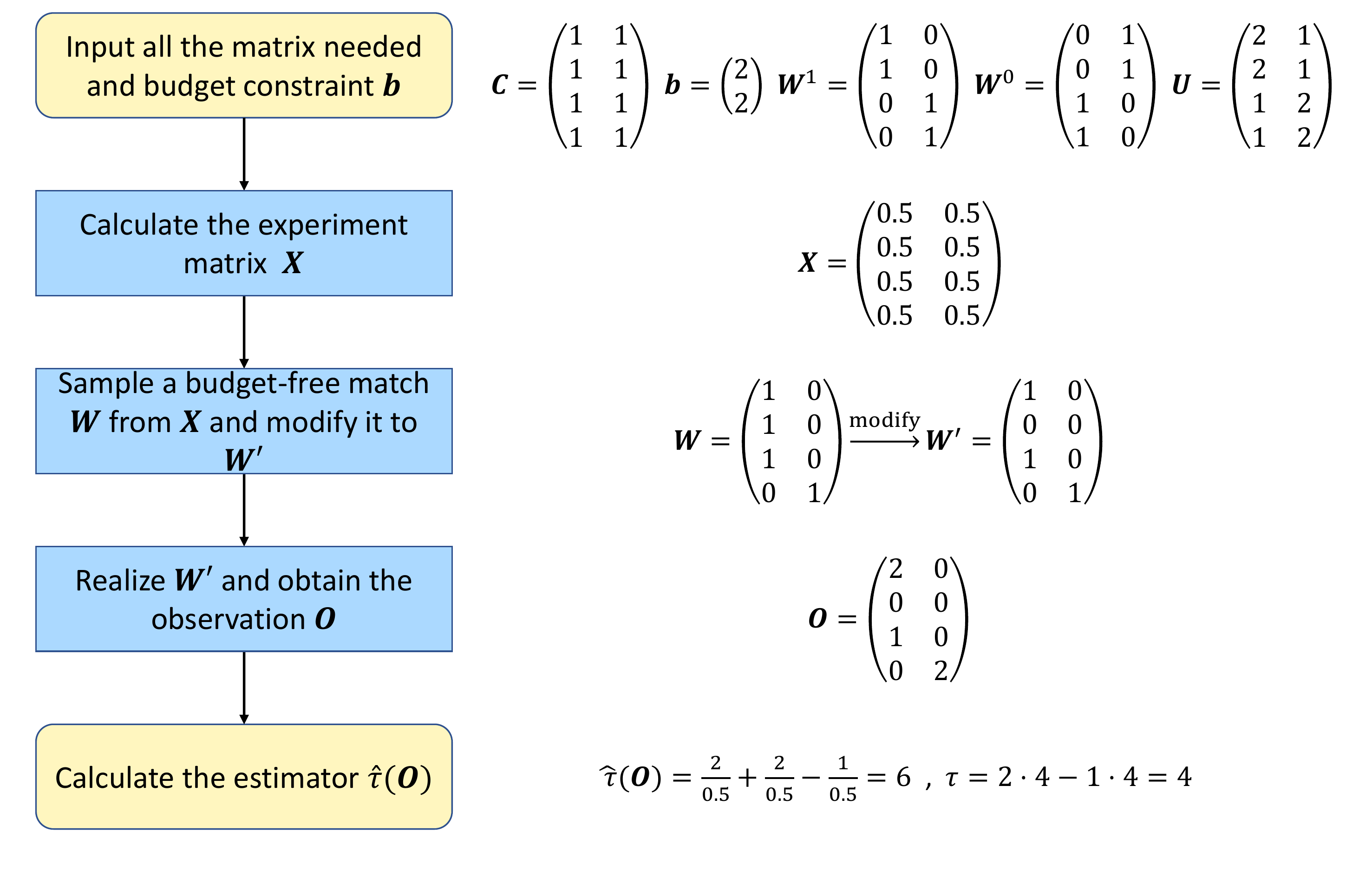}
  \caption{\textbf{The flow of experiment model.}}
  \label{fig:flow1}
\end{figure}

\section{Analysis of Bias and Variance}
\label{sec:bias}
In this section, we 
provide an analysis for the bias and variance of $\hat{\tau}(\mathbf{O})$.

\subsection{Bias}
First, we show that when the budget is sufficient, $\hat{\tau}$ is unbiased, which echoes prior studies such as 
 \citet{liu2021trustworthy}.
 Intuitively, no item will be throttled with a sufficient budget, and thus there is no bias brought by budget constraints. When there is more budget, bias becomes less. Formally, we show the following proposition to describe the relationship between budget and bias.

\begin{proposition}
Given an experimental matrix $\mathbf{X}$, when the budget $b_j\ge \sum_i c_{ij}\vmathbb{1}(x_{ij}>0)$ for any buyer $j$ and $x_{ij}>0$ for any pair $i,j$ satisfies $w^0_{ij}+w^1_{ij}>0$, then $\mathbb{E}[\hat{\tau}(\mathbf{O})]=\tau$. Particularly, if we choose $\mathbf{X}=p\mathbf{W}^1+(1-p)\mathbf{W}^0$,  when $b_j\ge \sum_{i} c_{ij}(w_{ij}^0+w_{ij}^1)$ for any $j$, $\hat{\tau}(\mathbf{O})$ is unbiased. 
\label{prop:unbias}
\end{proposition}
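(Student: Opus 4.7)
The plan is to show that the stated budget condition forces the throttling step to be a no-op almost surely, so that $\hat{\tau}$ coincides with the Horvitz--Thompson estimator $\bar{\tau}$, and then invoke Proposition~\ref{prop:HTestimator} for unbiasedness. The key observation is that any allocation matrix $\mathbf{W}$ drawn from $\mathbf{X}$ satisfies $w_{ij}\in\{0,1\}$ and $w_{ij}=1$ only if $x_{ij}>0$. Consequently, for every buyer $j$,
\begin{equation*}
\sum_i c_{ij} w_{ij} \;\le\; \sum_i c_{ij}\,\vmathbb{1}(x_{ij}>0) \;\le\; b_j,
\end{equation*}
so every sampled $\mathbf{W}$ is already budget-satisfying and the modification function leaves it unchanged, i.e.\ $M(\mathbf{W})=\mathbf{W}$ (this works identically for sequential or random throttling).

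From here I would argue that $p_{ij}=\Pr_{\mathbf{W}\sim\mathbf{X}}(M(\mathbf{W})_{ij}=1)=\Pr_{\mathbf{W}\sim\mathbf{X}}(w_{ij}=1)=x_{ij}$ for every pair $(i,j)$. This gives the pointwise identity $\hat{\tau}(\mathbf{O})=\bar{\tau}(\mathbf{O})$, because the two estimators differ only in whether the denominator is $x_{ij}$ or $p_{ij}$. Since we also assumed $x_{ij}>0$ whenever $w^0_{ij}+w^1_{ij}>0$, Proposition~\ref{prop:HTestimator} applies and gives $\mathbb{E}[\hat{\tau}(\mathbf{O})]=\mathbb{E}[\bar{\tau}(\mathbf{O})]=\tau$.

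For the second (Bernoulli-randomization) claim, I would simply verify that the specialized hypothesis is stronger than the general one. With $\mathbf{X}=p\mathbf{W}^1+(1-p)\mathbf{W}^0$ and $p\in(0,1)$, we have $x_{ij}>0$ precisely when $w^1_{ij}+w^0_{ij}>0$, so the second hypothesis of the general statement is automatic. Moreover, since $w^0_{ij},w^1_{ij}\in\{0,1\}$,
\begin{equation*}
\vmathbb{1}(x_{ij}>0)\;=\;\vmathbb{1}(w^0_{ij}+w^1_{ij}>0)\;\le\;w^0_{ij}+w^1_{ij},
\end{equation*}
so $\sum_i c_{ij}(w^0_{ij}+w^1_{ij})\ge\sum_i c_{ij}\vmathbb{1}(x_{ij}>0)$, and the assumed budget bound implies the one needed for the general part.

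The only subtlety I anticipate is bookkeeping around the throttling function: one has to make sure that the support argument really covers \emph{every} realization of $\mathbf{W}\sim\mathbf{X}$ (including corner cases where some items are aborted, i.e.\ $\sum_j w_{ij}=0$), and that the definition $0/0=0$ handles pairs with $x_{ij}=0$ cleanly in $\hat{\tau}$. Neither of these is really hard, but they are the places where the argument must be stated carefully so that the identification $p_{ij}=x_{ij}$ holds unconditionally on the sample path.
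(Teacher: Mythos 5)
Your proposal is correct and follows essentially the same route as the paper: the budget condition makes throttling a no-op, so $p_{ij}=x_{ij}$ and $\hat{\tau}=\bar{\tau}$, and unbiasedness then follows from Proposition~\ref{prop:HTestimator}. Your treatment is simply more explicit than the paper's (which leaves the Bernoulli specialization and the invocation of Proposition~\ref{prop:HTestimator} implicit), but there is no substantive difference in approach.
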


\begin{proof}
The largest cost of $j$ is $\sum_i c_{ij}\vmathbb{1}(x_{ij}>0)$ for any $j$, which occurs when every potential possible item is allocated to $j$. So when $b_j\ge \sum_i c_{ij}\vmathbb{1}(x_{ij}>0)$, it is impossible to overspend. Then $\Pr_{\mathbf{W}\sim\mathbf{X}}(i\ is\ allocated\ to\ j)=x_{ij}$, i.e. $\hat{\tau}(\mathbf{O})=\bar{\tau}(\mathbf{O})$.
\end{proof}

Although in other general situations $\hat{\tau}$ is biased, the bias can be bounded under some reasonable assumptions. There are several insights to control the bias. According to the law of large numbers, with a high probability, the cost of each buyer is close to their expected cost, especially when there are lots of items. So we only need to control the expected cost of each buyer. Then only the items with large indexes will be throttled and each item is assigned a large index with a small probability after random permutation. So the probability of throttling is small under random throttling.

As we discussed, we define the special family of experiments as the expected budget-satisfying experiment matrix.

\begin{definition}[Expected budget-satisfying experiment matrix]

We say an experiment matrix $\mathbf{X}$ is expected budget-satisfying if for any $j$, $\sum_{i} x_{ij}c_{ij}\le b_j$.
\end{definition}

It's directly derived that all expected budget-satisfying experiments form a convex set. And we get the corollary that any Bernoulli randomization is a budget-satisfying experiment design.

Now we state our main theorem.

\begin{theorem}
If the cost $c_{ij}\in[l,h]$ for any $i,j$. Then for any expected budget-satisfying experiment matrix $\mathbf{X}$ which satisfies $x_{ij}\ge x_0$ for any $i,j$ satisfies $w^0_{ij}+w^1_{ij}>0$. If we use random throttling and the item allocated to any buyer goes to infinity in allocation 1 or allocation 0, the average bias goes to 0. That is, $\frac{\sum_{i}(w_{ij}^1+w_{ij}^0)}{n}\rightarrow\infty$ for any $j$. Then $\frac{1}{m}\left|\mathbb{E}[\hat{\tau}(\mathbf{O})]-\tau\right|\rightarrow 0$. To be specific, $\frac{1}{m}\left|\mathbb{E}[\hat{\tau}(\mathbf{O})]-\tau\right|\le O(m_{k})^{-1/3})$ where $m_k=\min_j \sum_{i}(w_{ij}^1+w_{ij}^0)$.
\label{thm:unbias}
\end{theorem}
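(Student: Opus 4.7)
The plan is to reduce the bias to a one-dimensional concentration question about the random total cost $S_j=\sum_i w_{ij}c_{ij}$ incurred at each buyer $j$ under the unmodified sampling, and then exploit the expected-budget-satisfying condition $\mathbb{E}[S_j]\le b_j$ together with the fact that random throttling only removes items (so $p_{ij}\le x_{ij}$). Since $\mathbb{E}[o_{ij}]=p_{ij}u_{ij}$, I would first rewrite
\[
\mathbb{E}[\hat{\tau}(\mathbf{O})]-\tau=\sum_{i,j}\frac{p_{ij}-x_{ij}}{x_{ij}}\,u_{ij}(w^1_{ij}-w^0_{ij}),
\]
and then, using $x_{ij}\ge x_0$ on the support of $w^0+w^1$, the pointwise bound $|w^1_{ij}-w^0_{ij}|\le w^1_{ij}+w^0_{ij}$, and a uniform bound $|u_{ij}|\le u_{\max}$, obtain
\[
|\mathbb{E}[\hat{\tau}]-\tau|\le \frac{2u_{\max}}{x_0}\sum_j\mathbb{E}[T_j],
\]
where $T_j$ is the number of items thrown away from buyer $j$. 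The identity $\sum_i(x_{ij}-p_{ij})=\mathbb{E}[T_j]$ is just linearity of expectation applied to $x_{ij}-p_{ij}=\Pr(i\text{ is sampled to }j\text{ and then discarded})$; no special structure of random throttling is needed for it.

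The heart of the proof is bounding $\mathbb{E}[T_j]$ via cost concentration. Because every thrown item costs at least $l$ and throttling stops as soon as the running cost drops below $b_j$, the total cost removed for $j$ lies in $[(S_j-b_j)^+,\,(S_j-b_j)^++h]$, so $T_j\le ((S_j-b_j)^++h)/l$. Now $S_j$ is a sum of independent $[0,h]$-valued variables with $\mathbb{E}[S_j]\le b_j$, hence
\[
\mathbb{E}[(S_j-b_j)^+]\le \mathbb{E}\bigl[|S_j-\mathbb{E}S_j|\bigr]\le \sqrt{\mathrm{Var}(S_j)}\le h\sqrt{N_j},
\]
where $N_j=|\{i:x_{ij}>0\}|$. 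Summing over $j$, using Cauchy--Schwarz together with the elementary relations $\sum_j N_j=O(m)$ and $n\cdot m_k\le 2m$ (since $\sum_j w^k_{ij}\le 1$ forces $\sum_{i,j}(w^1_{ij}+w^0_{ij})\le 2m$), and dividing by $m$ gives $\tfrac{1}{m}|\mathbb{E}[\hat{\tau}]-\tau|=O(m_k^{-1/2})$, which implies the claimed $O(m_k^{-1/3})$ rate (and is in fact somewhat stronger).

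The main obstacle is step~2, the concentration estimate on $(S_j-b_j)^+$ uniformly in $j$. Two subtleties must be handled cleanly: first, the expected-budget-satisfying condition is used crucially to replace $b_j$ by $\mathbb{E}[S_j]$ inside the positive part, since without $\mathbb{E}[S_j]\le b_j$ the quantity $\mathbb{E}[(S_j-b_j)^+]$ need not even be $o(1)$; second, the per-buyer ``effective size'' $N_j$ must be related to the buyer-minimum $m_k$ appearing in the statement, which is where the support lower bound $x_{ij}\ge x_0$ combined with $\sum_j x_{ij}\le 1$ enters. A Bernstein-type sharpening at step~2 would yield a better exponent; if the authors' stated rate is $1/3$ rather than $1/2$, it is almost surely slack in an intermediate inequality (for instance, using Markov on $(S_j-b_j)^+$ directly rather than Cauchy--Schwarz) rather than a tight obstruction. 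The choice between sequential and random throttling does not enter the above argument, because the bound $T_j\le ((S_j-b_j)^++h)/l$ holds for any throttling rule that stops at the first feasible point.
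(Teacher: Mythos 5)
Your argument is correct and reaches the stated conclusion by a genuinely different route --- in fact a slightly stronger one. The paper works pair by pair: it conditions on the position $k=\Sigma(i)$ of item $i$ in the random permutation, applies Hoeffding's inequality to the cost of the items preceding it, and splits positions at roughly $T+m^{2/3}$ to obtain a \emph{uniform} bound $1-p_{ij}/x_{ij}\le O(m_j^{-1/3})$ with $m_j=\sum_i(w^1_{ij}+w^0_{ij})$; the randomness of the permutation is essential there, since under sequential throttling the last-arriving items can have $p_{ij}/x_{ij}$ far from $1$ even though the total bias is small. You instead control only the aggregate $\sum_i(x_{ij}-p_{ij})=\mathbb{E}[T_j]$ via the expected budget overshoot, $\mathbb{E}[(S_j-b_j)^+]\le\mathbb{E}|S_j-\mathbb{E}S_j|\le\sqrt{\mathrm{Var}(S_j)}=O(h\sqrt{N_j})$, with the expected-budget-satisfying hypothesis entering exactly where you say it must. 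Because the bias only ever sees this aggregate, your bound is insensitive to \emph{which} items get discarded, so it applies to sequential as well as random throttling and yields the cleaner rate $O(m_k^{-1/2})$; your diagnosis that the paper's exponent $1/3$ comes from slack (the split point $m^{2/3}$ instead of, say, $\sqrt{m\log m}$) rather than from a real obstruction is accurate. What the paper's per-pair computation buys in exchange is uniform control of the realized inclusion probabilities $p_{ij}$, the kind of statement needed to instantiate the $\epsilon_0$ appearing in the MSE analysis, which cannot be recovered from your aggregate estimate. Two caveats, both shared with (and left implicit in) the paper's own proof: you need $|\mu_{ij}|$ uniformly bounded for the $u_{\max}$ step, and the relation $\sum_j N_j=O(m)$ requires that the support of $\mathbf{X}$ at each buyer essentially coincide with $\{i:w^1_{ij}+w^0_{ij}>0\}$ --- if $\mathbf{X}$ placed mass on many extraneous pairs, $\mathrm{Var}(S_j)$ would grow and both arguments would degrade identically.
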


\subsection{Variance}
As for the variance, we start from the following proposition, which calculates variance in the situation with a sufficient budget. The calculation details are placed in Appendix A.

\begin{proposition}
When the budget is sufficient, i.e. satisfies the condition in Proposition \ref{prop:unbias}. Suppose $\mathbb{E}[u_{ij}]=\mu_{ij},\mathrm{Var}(u_{ij})=\sigma_{ij}^2$. The variance of our estimator is

\begin{footnotesize}
 $$\mathrm{Var}(\hat{\tau}(\mathbf{O}))=\sum_{i,j} \frac{(\mu_{ij}^2+\sigma_{ij}^2)(w_{ij}^1+w_{ij}^0)}{x_{ij}}-\sum_{i,j}\mu_{ij}^2(w_{ij}^1+w_{ij}^0)+2\sum_{i,j,j'}\mu_{ij}w_{ij}^1\mu_{ij'}w_{ij'}^0.$$    
\end{footnotesize}
\label{prop:varfullbudget}
\end{proposition}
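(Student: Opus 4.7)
The plan is first to reduce to a clean stochastic model. Under the sufficient-budget hypothesis of Proposition~4.1, no throttling occurs, so $\hat{\tau}$ equals the Horvitz--Thompson estimator $\bar{\tau}$ and, under the independent-per-item experiment matrix $\mathbf{X}$, the realized row vectors $\mathbf{W}_i \in \{0,1\}^n$ are mutually independent across $i$, with $\Pr(W_{ij}=1)=x_{ij}$ and $\sum_j W_{ij}\le 1$. Writing
\[
\hat{\tau}(\mathbf{O})=\sum_i Y_i,\qquad Y_i:=\sum_j \frac{W_{ij}u_{ij}(w^1_{ij}-w^0_{ij})}{x_{ij}},
\]
independence across $i$ gives $\mathrm{Var}(\hat\tau)=\sum_i \mathrm{Var}(Y_i)$, so it suffices to compute the per-item variance and sum.

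Second, decompose $Y_i = A_i - B_i$ with $A_i=\sum_j W_{ij}u_{ij}w^1_{ij}/x_{ij}$ and $B_i=\sum_j W_{ij}u_{ij}w^0_{ij}/x_{ij}$. I will repeatedly exploit four identities: (i) $W_{ij}W_{ij'}=0$ for $j\neq j'$ because item $i$ is allocated at most once; (ii) $W_{ij}^2=W_{ij}$; (iii) $\mathbb{E}W_{ij}=x_{ij}$; and (iv) independence of $W_{ij}$ from $u_{ij}$. By (i) and (ii), all cross-$j$ terms in $A_i^2$, $B_i^2$ and $A_iB_i$ vanish, collapsing each to a single sum over $j$. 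Taking expectations and using (iii), (iv) and $(w^k_{ij})^2=w^k_{ij}$ yields
\[
\mathbb{E}[A_i^2]=\sum_j \frac{(\mu_{ij}^2+\sigma_{ij}^2)w^1_{ij}}{x_{ij}},\quad \mathbb{E}[B_i^2]=\sum_j \frac{(\mu_{ij}^2+\sigma_{ij}^2)w^0_{ij}}{x_{ij}},
\]
and analogously for $\mathbb{E}[A_iB_i]$.

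Third, I assemble $\mathrm{Var}(Y_i) = \mathbb{E}[A_i^2]+\mathbb{E}[B_i^2]-2\mathbb{E}[A_iB_i] - (\mathbb{E}A_i)^2 - (\mathbb{E}B_i)^2 + 2\mathbb{E}[A_i]\mathbb{E}[B_i]$. The key collapse is that since $\sum_j w^k_{ij}\le 1$, the only surviving diagonal in $(\sum_j \mu_{ij}w^k_{ij})^2$ has $j=j'$, so $(\mathbb{E}A_i)^2=\sum_j \mu_{ij}^2 w^1_{ij}$ and $(\mathbb{E}B_i)^2=\sum_j \mu_{ij}^2 w^0_{ij}$; summed over $i$ these assemble the $-\sum_{i,j}\mu_{ij}^2(w^1_{ij}+w^0_{ij})$ term. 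The residual $2\mathbb{E}[A_i]\mathbb{E}[B_i]=2\sum_{j,j'}\mu_{ij}\mu_{ij'}w^1_{ij}w^0_{ij'}$ provides the stated cross term, while the $-2\mathbb{E}[A_iB_i]$ contribution is absorbed into the first sum (vanishing whenever $\mathbf{W}^0$ and $\mathbf{W}^1$ do not coincide on an edge, and otherwise converting the coefficient $w^1_{ij}+w^0_{ij}$ into its XOR). Summing $\mathrm{Var}(Y_i)$ over $i$ then delivers the claimed identity.

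The calculation is essentially routine; the main (minor) obstacle is bookkeeping the cross terms and knowing \emph{when} to invoke the one-hot constraint $\sum_j w^k_{ij}\le 1$ — it is precisely this constraint that collapses the otherwise quadratic $(\sum_j \mu_{ij}w^k_{ij})^2$ into a linear sum and produces the clean form of the middle term. All probabilistic content reduces to independence across items plus the multinomial-type structure of each row $\mathbf{W}_i$.
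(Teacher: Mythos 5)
Your proof is correct and follows essentially the same route as the paper's: a direct second-moment computation that exploits independence across items, the one-hot structure of each row of $\mathbf{W}$, and $\mathbb{E}[W_{ij}]=x_{ij}$, with your per-item grouping $Y_i=A_i-B_i$ being just a cleaner bookkeeping of the paper's variance-plus-covariance expansion over $(i,j)$ pairs. You are also right that the exact coefficient in the first sum is $(w^1_{ij}-w^0_{ij})^2$ rather than $w^1_{ij}+w^0_{ij}$ whenever the two allocations coincide on an edge; the paper's own derivation glosses over this same point, so your explicit remark about the XOR is, if anything, a small improvement.
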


When the budget is insufficient, due to the bias of $\hat{\tau}$, we calculate the mean square error (MSE) instead of variance. The accurate value of MSE relies on the probability of overspending, which is intractable. So we only give an upper bound for it. The proof is also deferred to Appendix A.

\begin{proposition}
\begin{small} The MSE of $\hat{\tau}(\mathbf{O})$ is bounded by
$$MSE(\hat{\tau}(\mathbf{O}))\le \sum_{i,j}\frac{(\mu_{ij}^2+\sigma_{ij}^2)(w_{ij}^1+w_{ij}^0)}{x_{ij}}+\left(\sum_{i,j}\mu_{ij}w_{ij}^1\right)^2+\left(\sum_{i,j}\mu_{ij}w_{ij}^0\right)^2.$$
\end{small}
\label{prop:varlimitbudget}
\end{proposition}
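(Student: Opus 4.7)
The plan is to split $\mathrm{MSE}(\hat\tau)=\mathrm{Var}(\hat\tau)+(\mathbb{E}[\hat\tau]-\tau)^2$ and control each piece against the right-hand side of the claim.

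For the squared-bias piece, I would use that any throttling rule only rejects allocations, hence $p_{ij}\le x_{ij}$; together with non-negative utilities this forces $\mathbb{E}[A]=\sum_{i,j}p_{ij}\mu_{ij}w_{ij}^1/x_{ij}\in[0,\mu^1]$ and analogously $\mathbb{E}[B]\in[0,\mu^0]$, where $\mu^k:=\sum_{i,j}\mu_{ij}w_{ij}^k$ and $A,B$ are the two sums making up $\hat\tau$. Consequently $\mathbb{E}[\hat\tau]-\tau=(\mathbb{E}[A]-\mu^1)-(\mathbb{E}[B]-\mu^0)$ lies in $[-\mu^1,\mu^0]$, so its square is at most $\max\{(\mu^1)^2,(\mu^0)^2\}\le(\mu^1)^2+(\mu^0)^2$, which accounts for the last two RHS terms.

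For the variance piece, I would decompose $\hat\tau=\sum_i Z_i$ item-by-item, where $Z_i=u_{iJ_i}(w_{iJ_i}^1-w_{iJ_i}^0)/x_{iJ_i}$ is the (at most single) contribution of item $i$ allocated to buyer $J_i$, and $0$ if $i$ is aborted. The key elementary identity is that for binary indicators $(w^1-w^0)^2\le w^1+w^0$; combined with $p_{ij}\le x_{ij}$ and $\mathbb{E}[u_{ij}^2]=\mu_{ij}^2+\sigma_{ij}^2$, this yields the per-item bound $\mathbb{E}[Z_i^2]\le\sum_j(\mu_{ij}^2+\sigma_{ij}^2)(w_{ij}^1+w_{ij}^0)/x_{ij}$, whose sum over $i$ reproduces the first RHS term exactly.

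The hard part will be the cross covariances $\mathrm{Cov}(Z_i,Z_{i'})$ for $i\neq i'$: these vanish under the unmodified independent-item distribution but become nontrivial once throttling couples items through a shared budget. I plan to argue they contribute non-positively in the aggregate by coupling with the unthrottled allocation and using the intuition that allocating item $i$ to buyer $j$ can only consume $j$'s budget, and hence only reduce the chance that $i'$ is also allocated to $j$—a negative-dependence structure. Executing this carefully for both sequential and random throttling would give $\mathrm{Var}(\hat\tau)\le\sum_i\mathbb{E}[Z_i^2]$, which together with the bias bound above yields the proposition.
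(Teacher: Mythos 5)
Your overall architecture differs from the paper's: you split $\mathrm{MSE}=\mathrm{Var}+\mathrm{Bias}^2$ and try to charge the two terms $\left(\sum_{i,j}\mu_{ij}w_{ij}^1\right)^2+\left(\sum_{i,j}\mu_{ij}w_{ij}^0\right)^2$ entirely to the squared bias, whereas the paper expands $\mathbb{E}\bigl[(\hat\tau-\sum_{i,j}\mu_{ij}(w^1_{ij}-w^0_{ij}))^2\bigr]$ directly into five sums and uses only $0\le p_{ij}\le x_{ij}$ (via $p_{ij}\ge(1-\epsilon_0)x_{ij}$ with $\epsilon_0\le 1$) to bound each one. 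In the paper, the squares $\left(\sum\mu_{ij}w^k_{ij}\right)^2$ are \emph{not} a bias budget: they are assembled from the diagonal leftovers $\sum_{i,j}\mu_{ij}^2 w^k_{ij}$ of the second-moment terms together with crude bounds $2\epsilon_0\mu_{ij}\mu_{i'j'}w^k_{ij}w^k_{i'j'}\le 2\mu_{ij}\mu_{i'j'}w^k_{ij}w^k_{i'j'}$ on the same-arm cross terms. That is precisely the room your plan gives away. Your item-level second-moment bound $\sum_i\mathbb{E}[Z_i^2]\le\sum_{i,j}(\mu_{ij}^2+\sigma_{ij}^2)(w^1_{ij}+w^0_{ij})/x_{ij}$ and your bias bound (both of which tacitly need $\mu_{ij}\ge 0$, as does the paper) are fine.

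The genuine gap is the claim $\sum_{i\ne i'}\mathrm{Cov}(Z_i,Z_{i'})\le 0$, which you defer but which your accounting cannot do without: having spent up to $\max\{(\mu^1)^2,(\mu^0)^2\}$ on the bias, a crude bound $|\mathrm{Cov}(Z_i,Z_{i'})|\le\sum_{j,j'}\mu_{ij}\mu_{i'j'}(w^1_{ij}+w^0_{ij})(w^1_{i'j'}+w^0_{i'j'})$ overshoots the remaining budget. And the sign claim itself is doubtful. Write $\mathrm{Cov}(Z_i,Z_{i'})=\sum_{j,j'}\bigl(q_{ij,i'j'}-p_{ij}p_{i'j'}\bigr)\mu_{ij}\mu_{i'j'}(w^1_{ij}-w^0_{ij})(w^1_{i'j'}-w^0_{i'j'})/(x_{ij}x_{i'j'})$, where $q$ is the joint survival probability. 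Your negative-dependence intuition covers the case $j=j'$ (two items competing for one buyer's budget, with sign factor $+1$ when both are, say, treatment edges). But for $j\ne j'$ the survival events can be \emph{positively} correlated --- a third item that can land on either buyer $j$ or buyer $j'$ threatens exactly one of them, coupling the two survivals positively --- and when the sign factor $(w^1_{ij}-w^0_{ij})(w^1_{i'j'}-w^0_{i'j'})$ is also $+1$ this yields a positive contribution. So the aggregate non-positivity is not established by the stated heuristic and may fail. To close the proof you should either prove this negative-association statement for the specific throttling rules (a substantial lemma), or switch to the paper's bookkeeping, in which the cross terms are absorbed into the squares and no sign information beyond $0\le q\le x_{ij}x_{i'j'}$ and $0\le p_{ij}\le x_{ij}$ is required.
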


\section{Optimization}
\label{sec:opt}
In this section, we will discuss how to obtain a well-performing experimental design in both offline and online settings.
\subsection{Offline Setting}
\label{subsec:off}
From the calculation in the previous section, the only variable in the upper bound of MSE is $\mathbf{X}$. So if we minimize $\sum_{i=1}^n \frac{(\mu_{ij}^2+\sigma_{ij}^2)(w_{ij}^1+w_{ij}^0)}{x_{ij}}$, then we can control the MSE of $\hat{\tau}(\mathbf{O})$. To give an effective experiment, we choose the solution $\mathbf{X}_1^*$ of the following optimization problem:

\begin{equation}
\begin{aligned}
    \min_{x_{ij}} \quad& \sum_{i,j} \frac{(\mu_{ij}^2+\sigma_{ij}^2)(w_{ij}^1+w_{ij}^0)}{x_{ij}}\\
    \mbox{s.t.} \quad & 0< x_{ij}\le 1 &\forall i,j\\
    & \sum_{j} x_{ij}\le 1 & \forall i \\
    & \sum_{i} x_{ij}c_{ij}\le b_j & \forall j\\
    \label{formula:opt1}
\end{aligned}
\end{equation}

We do not allow $x_{ij}=0$ here since they are the denominators. However, when $w_{ij}^1+w_{ij}^0=0$ for pair $i,j$, 
$x_{ij}$ will be $0$ in the optimal solution. In the numerical calculation, we will calculate a solution with an arbitrarily small value for those pairs. Thus it will approach the optimal solution with arbitrary accuracy. Without the linear budget constraint, we can solve the problem directly using Lagrangian. If we define another optimization problem:

\begin{equation}
\begin{aligned}
    \min_{x_i} \quad& \sum_{i,j} \frac{(\mu_{ij}^2+\sigma_{ij}^2)(w_{ij}^1+w_{ij}^0)}{x_{ij}}\\
    \mbox{s.t.} \quad & 0< x_{ij}\le 1 &\forall i,j\\
    & \sum_{j} x_{ij}\le 1 & \forall i \\
    \label{formula:opt2}
\end{aligned}
\end{equation}
The solution $\mathbf{X}_2^*$ of Problem \ref{formula:opt2} is 
$$x_{ij}^*=\frac{(w_{ij}^1+w_{ij}^0)\sqrt{\mu_{ij}^2+\sigma_{ij}^2}}{\sum_{j'} (w_{ij'}^1+w_{ij'}^0)\sqrt{\mu_{ij'}^2+\sigma_{ij'}^2}}.$$ 
When the budget is more, the solution of Problem \ref{formula:opt1} is closer to Problem \ref{formula:opt2}. So this solution provides a simple approximation for the optimal experiment matrix. In addition, if we assume $u_{ij}$ is sampled from the same distribution independently for any pair $i,j$. Then Bernoulli randomization with $p=1/2$ is just the optimal solution.

As for the precise solution of Problem \ref{formula:opt1}, due to the convex objective function with $\mathbf{X}$ and the linear constraint, we can use some convex optimizer to solve it. 

\subsection{Online Setting}
In practical applications, it is not always feasible to have complete knowledge of all items beforehand and determine the allocation accordingly. For instance, in advertisement platforms, user search requests arrive in a sequential manner and the platform must immediately decide which advertisement to display on the web page. Similarly, in ride-hailing platforms, a driver must be assigned to a customer as soon as their request is received. To address these real-world scenarios, we extend our basic offline model to an online model. This extension accounts for the dynamic nature of the problem, where items arrive in streams, and the allocation must be made based on the current information.

Still, we assume there is an item sequence that item $i$ will bring utility $u_{ij}$ and cost $c_{ij}$ for buyer $j$. We say the history information up to item $i$ contains the cost and the real revenue observed up to $i$. That is, $H_i=(\mathbf{C}_i,\mathbf{O}_i)\cup H_{i-1}$ and $H_0=\emptyset$. The remaining budget can be calculated by $H_i$. An experiment design is a matrix $\mathbf{X}$ where each line $X_i=p(H_{i-1},\mathbf{b})$ relies on the history information and budgets.  

Notice that though in the online setting we cannot even get the whole allocations $\mathbf{W^1},\mathbf{W^0}$. Instead we know $\mathbf{W}_i^1,\mathbf{W}_i^1$ at the arrival of item $i$. We suppose $\mathbf{W}^1_i,\mathbf{W}^0_i$ will not change with the history information. We summarize the online model by Figure \ref{fig:flow2}.

\begin{figure}[!ht]\centering
  \includegraphics[scale=0.25]{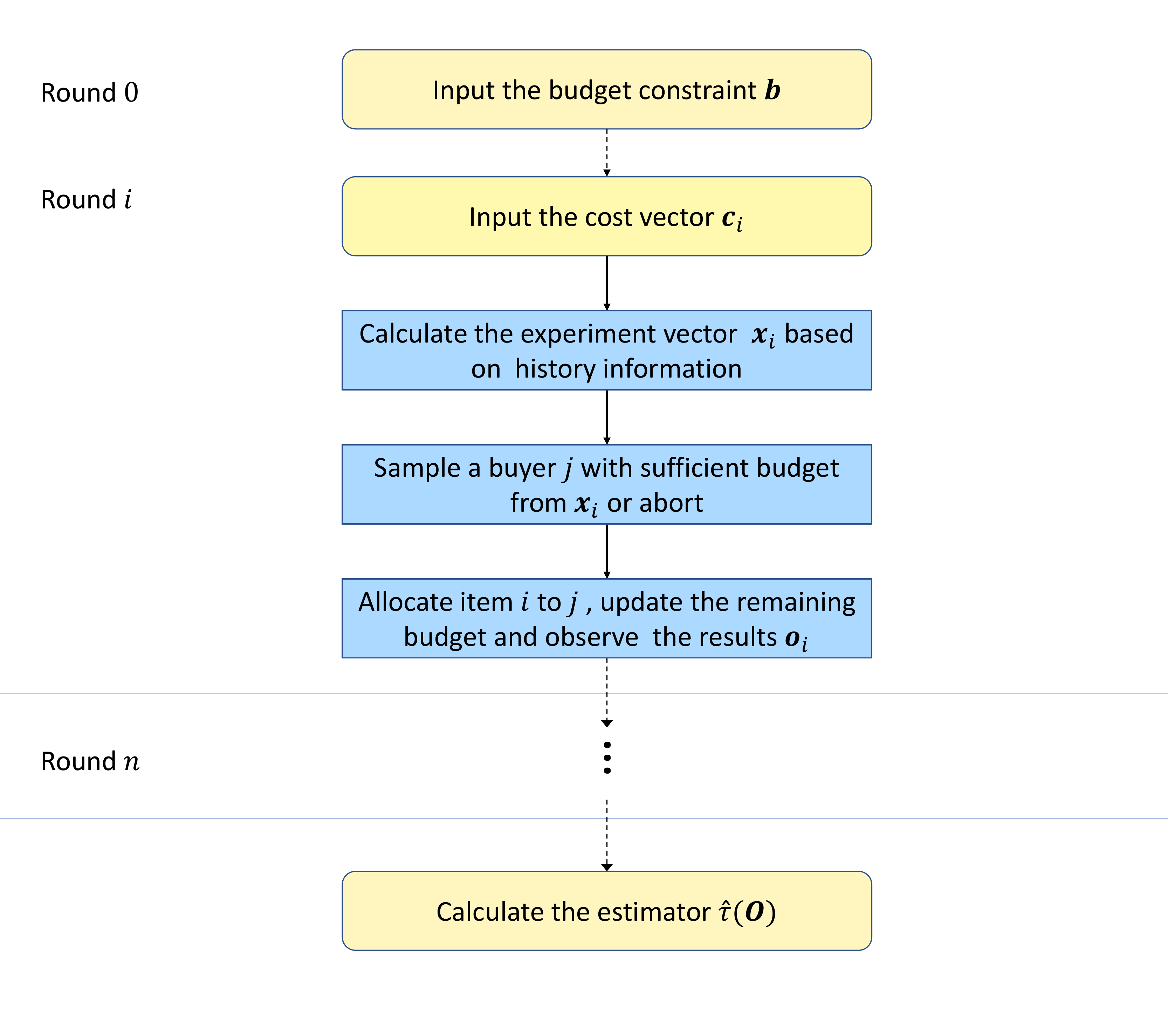}
  \caption{\textbf{The flow of online experiment model.}}
  \label{fig:flow2}
\end{figure}

Similar to the objective function in Section \ref{subsec:off}, we aim to minimize
$$\sum_{x_{ij}>0} \frac{(\mu_{ij}^2+\sigma_{ij}^2)(w_{ij}^1+w_{ij}^0)}{x_{ij}}.$$

Following the algorithm in \cite{agrawal2014dynamic,zhang2022online}, we use Algorithm \ref{alg:online} to solve the online problem.
The algorithm does not use the history information such as the history allocation or the left budget. In brief, we calculate the optimal solution up to item $i$ based on the proportional split budget. And we use the $i$-th row of the temporary optimal solution as the allocation vector of item $i$.

\begin{algorithm}
  \SetAlgoLined
  \KwIn{$\mathbf{b}$}
  \KwOut{an estimation $\hat{\tau}$}

  $\mathbf{O}\leftarrow\mathbf{0}$\tcp{observation matrix}
  $\mathbf{X}\leftarrow\mathbf{0}$\tcp{experiment matrix}
  $\mathbf{b'}\leftarrow\mathbf{0}$\tcp{current used budget}
  \For{$i\leftarrow 1$ \KwTo $m$}{
    add new request to $\mathbf{W}^0,\mathbf{W}^1$
    $\mathbf{X}\leftarrow Solve(\mathbf{W}^0,\mathbf{W}^1,i*\mathbf{b}/m)$\tcp{solve the optimization problem by request up to time $i$ under scaled budget}
    $j\leftarrow sample(\mathbf{X}_i)$\tcp{sample the allocation index of item $i$}
    \If{$c_{ij}+b'_j\le b_j$\tcp{feasible test}}  
    {  
        \tcp{allocate $i$ to buyer $j$ and observe the results}
        $b_j'\leftarrow b_j'+c_{ij}$\;
        $O_{ij}\leftarrow observe(i,j)$\;
    }
  }
  $\hat{\tau}\leftarrow estimator(\mathbf{O},\mathbf{X})$\tcp{estimate TTE}

  \caption{Online optimal experiment design}
  \label{alg:online}
\end{algorithm}

Under the ``random order'' assumption \cite{zhang2022online}, which means there is a pre-determined set of items and they come in random order, the algorithm performs well. Since we have to calculate a sub-problem with the size of $i$ at round $i$. If we suppose the solver takes time $O(f(m,n))$, the time complexity is $O(mf(m,n))$ and is still polynomial.

\section{Empirical Experiment}

In this section, we use some numerical experiments to verify our theoretical conclusions in both synthetic data and real-world data. We use the package \texttt{cvxpy} \cite{diamond2016cvxpy} in Python to solve the problem and choose the ``ECOS'' solver.

\subsection{Synthetic Data}
In the synthetic data, we mainly adjust two parameters: the supply-demand rate $r_1=\frac{m}{n}$ and the budget-cost rate $r_2=\frac{\sum_i b_i}{\max_{k=1,2}\sum_{i,j} c_{ij}w_{ij}^k}$. We set $n=10$ for example and set $m$ according to the parameter $r_1$ to show our results. We also test in different $n$ and find the results are robust. We uniformly and randomly sample $\mathbf{W}^1,\mathbf{W}^0$ from the set $\{\mathbf{e}_i\}$ where $\mathbf{e}_i$ is a one-hot vector and the value in its $i$-th dimension equals 1. Then $c_{ij}$ and $u_{ij}$ are sampled from the logarithmic normal distribution where $\mu=0,\sigma=1/4$ respectively. To obtain a non-zero TTE $\tau$, we double the $u_{ij}$ when $\mathbf{W}^1_{ij}=1$. It means that $\mathbf{W}^1$ allocates items to more proper buyers and brings more utility. The budget $b_i$ is simply the maximum cost used by buyer $i$ according to $\mathbf{W}^1$ or $\mathbf{W}^0$ times $r_2$. 

To ensure robust estimation of bias and variance, we sample 100 sets of parameters $\mathbf{W}_i^0,\mathbf{W}^1,\mathbf{U},\mathbf{C}$ and run 100,000 trials for each set. The following figures show the average results of 100 sets of parameters. As conventional Bernoulli randomization is infeasible under budget constraints, we compare our results to a Bernoulli randomization with item-end throttling. In this section, we refer to the modified Bernoulli randomization with throttling when we say Bernoulli randomization. In the offline setting, random throttling is employed, while in the online setting, sequential throttling is used.

To validate the result stated in Theorem \ref{thm:unbias} that the average bias approaches zero as the supply-demand rate $r_1$ increases for expected budget-satisfying experiments, we calculate the bias for both Bernoulli randomization and the optimal solution $\mathbf{X}_1^*$ under different $r_1$. We use the same instance in each experiment and divide it based on different values of $r_1$. The results, shown in Figure \ref{fig:supply}, indicate that both lines decrease and converge to zero as $r_1$ increases from 1 to 30. This suggests that when the number of items significantly exceeds the number of buyers on a platform, the bias can be disregarded.

Furthermore, to demonstrate the advantage of the optimal experimental design over Bernoulli randomization, we also compare their standard deviations in Figure \ref{fig:supply}. As depicted in the figure, when $r_1$ is low, the difference between them is relatively small. However, as $r_1$ increases, the optimal experiment design roughly reduces $20\%$ of the standard deviation. Hence, in large platforms, our optimal experiment design exhibits greater benefits.

To verify our claim in Section \ref{subsec:off} that the solution without budget constraints $\mathbf{X}_2^*$ is a good approximation for the solution with budget constraints $\mathbf{X}_1^*$, we compare their bias and variance across different values for the budget-cost rate $r_2$. As shown in Figure \ref{fig:budget}, their bias is comparable and approaches zero as $r_2$ increases, which aligns with Proposition \ref{prop:unbias}. The standard deviation of the two solutions differs by a constant factor and is subject to fluctuations. This discrepancy is likely due to the error in sampling and has no significant statistical significance.  This suggests that $\mathbf{X}_2^*$ can be the alternative of $\mathbf{X}_1^*$ to save computation time in some situations.

In order to evaluate the performance of our online algorithm, we compare its bias and standard deviation to the offline solution across different $r_1$. The results, shown in Figure \ref{fig:online}, reveal that the variance of our online algorithm has almost the same variance regardless of the value of $r_1$. Its bias is slightly lower as it adopts a more conservative approach when allocating the budget at the beginning. However, the difference is not significant. Overall our online algorithm is suitable in this setting, where each item's costs small are sampled randomly from a distribution.

In practice, there are chances that the assignments of some items are the same in the new algorithm and the old algorithm. At that time, regardless of the allocation, the contribution of these items in the TTE and our estimator is always zero. Thus the bias they brought is also zero. Based on this observation, we define a new parameter, the consistency rate $r_3=\frac{\sum_{i=1}^m \vmathbb{1}(\mathbf{w}^1_i=\mathbf{w}^0_i)}{m}$. It describes the ratio of items such that the allocations in $\mathbf{W}^1$ and $\mathbf{W}^0$ are the same. The more similar the two algorithms are, the higher the consistency rate is. To verify our expectation that higher $r_3$ brings less bias, we test the bias among different $r_3$ from $0$ to $1$ by setting $\mathbf{w}_i^1=\mathbf{w}_i^0$ for $i$ from 1 to $\lceil mr_3\rceil$. We also adjust $r_2$ from $1$ to $1.9$ for robustness, and the results are in Figure \ref{fig:consistent}. When consistency rate $r_3$ increases, the bias reduces for almost every budget-cost rate $r_1$. Thus the bias is not significant when the algorithms we compare are similar.

\subsection{Real-world Data}
To further test the performance of our experimental design, we conducted experiments using real-world data collected from the Tencent advertisement platform. Each item includes thousands of impression exposure to display the advertisements to users from WeChat or Tencent QQ. The platform runs multiple experiments simultaneously every day so a single experiment will only use part of the whole impressions and the budget is the maximum costs advertisers want to pay every day. The data we use comes from a single experiment, including basic information about around 14,000 items and 6000 buyers. Since we cannot get the real utility for every item-buyer pair, we stimulate the utility by the eCPM (effective cost per mille) module in the platforms. Considering the $O(mf(m,n))$ time complexity of our online algorithm, we use $\mathbf{X}_2^*$ as the ``optimal'' solution instead of directly applying Algorithm \ref{alg:online}. 

We sample 10,000 times for both Bernoulli randomization and optimal experiment. Given the large size of the real data, we present the relative bias and relative standard deviation, i.e. the ratios of the bias and standard deviation to the TTE, in Figure \ref{fig:real}. Our optimal experiment reduces the variance by sacrificing some bias. However, the standard deviation is much larger compared to bias, leading to a lower mean square error and making our experiment more effective in real-world scenarios.

\begin{figure}[htbp]
 \centering
  \begin{minipage}{0.48\linewidth}
   \centering
   \includegraphics[scale=0.25]{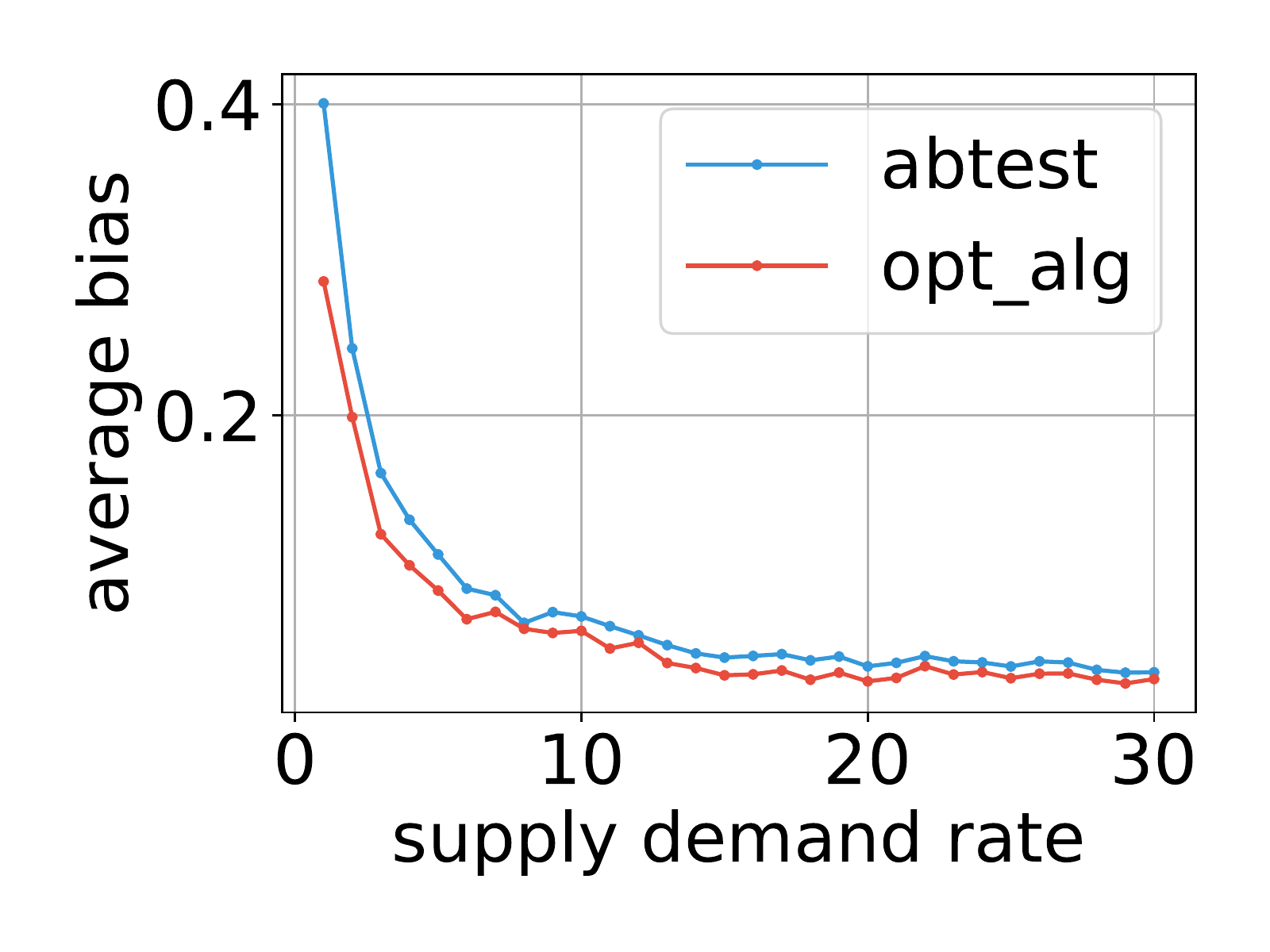}
  \end{minipage}
     \begin{minipage}{0.48\linewidth}
      \centering
      \includegraphics[scale=0.25]{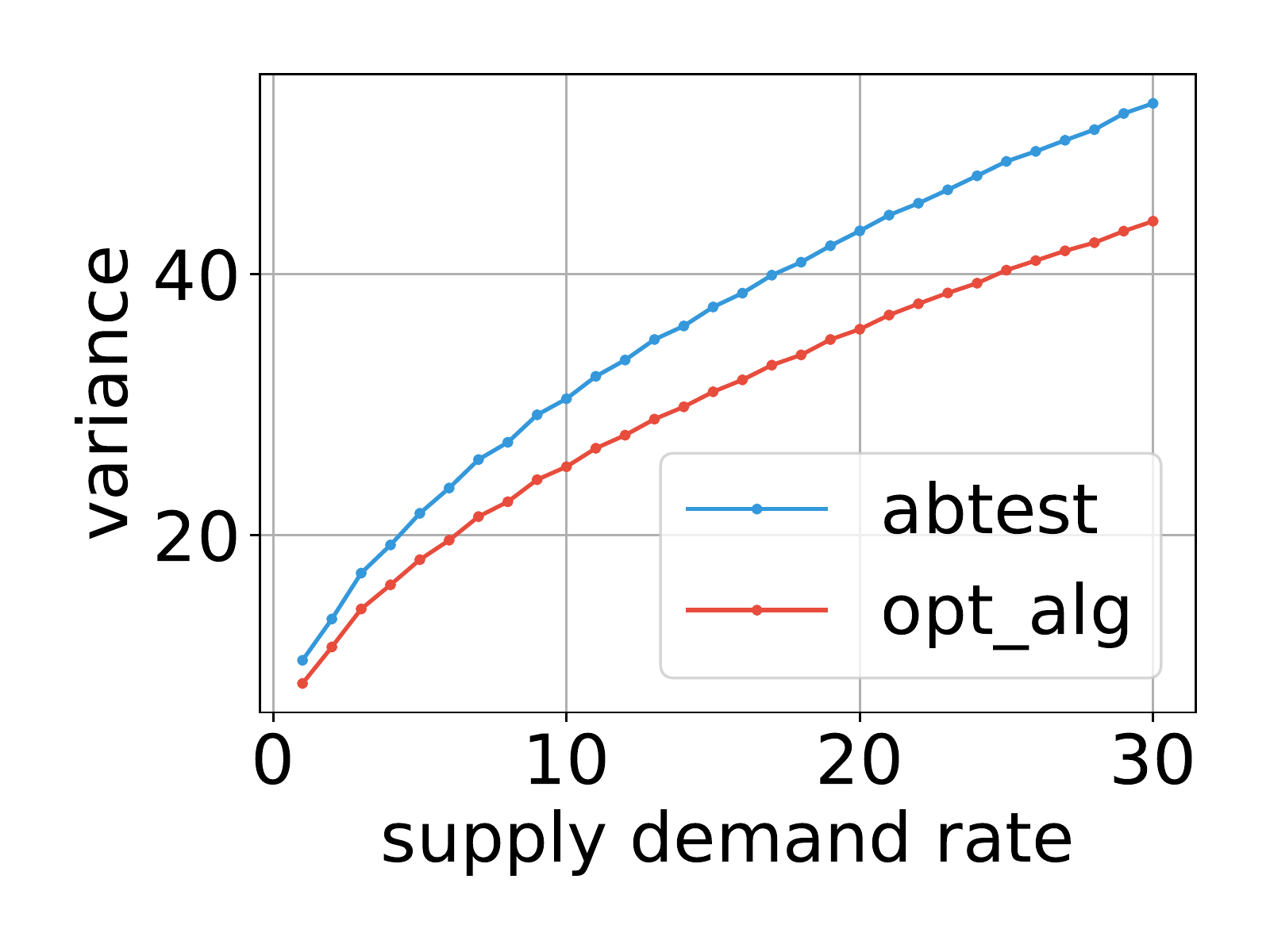}
     \end{minipage}

    \caption{\textbf{The bias and variance of optimal experiment and Bernoulli randomization where $p=0.5$ related to the supply-demand rate $r_1=m/n$. We fix $r_2=1$.}}
\label{fig:supply}
\end{figure}

\begin{figure}[htbp]
 \centering
  \begin{minipage}{0.48\linewidth}
   \centering
   \includegraphics[scale=0.25]{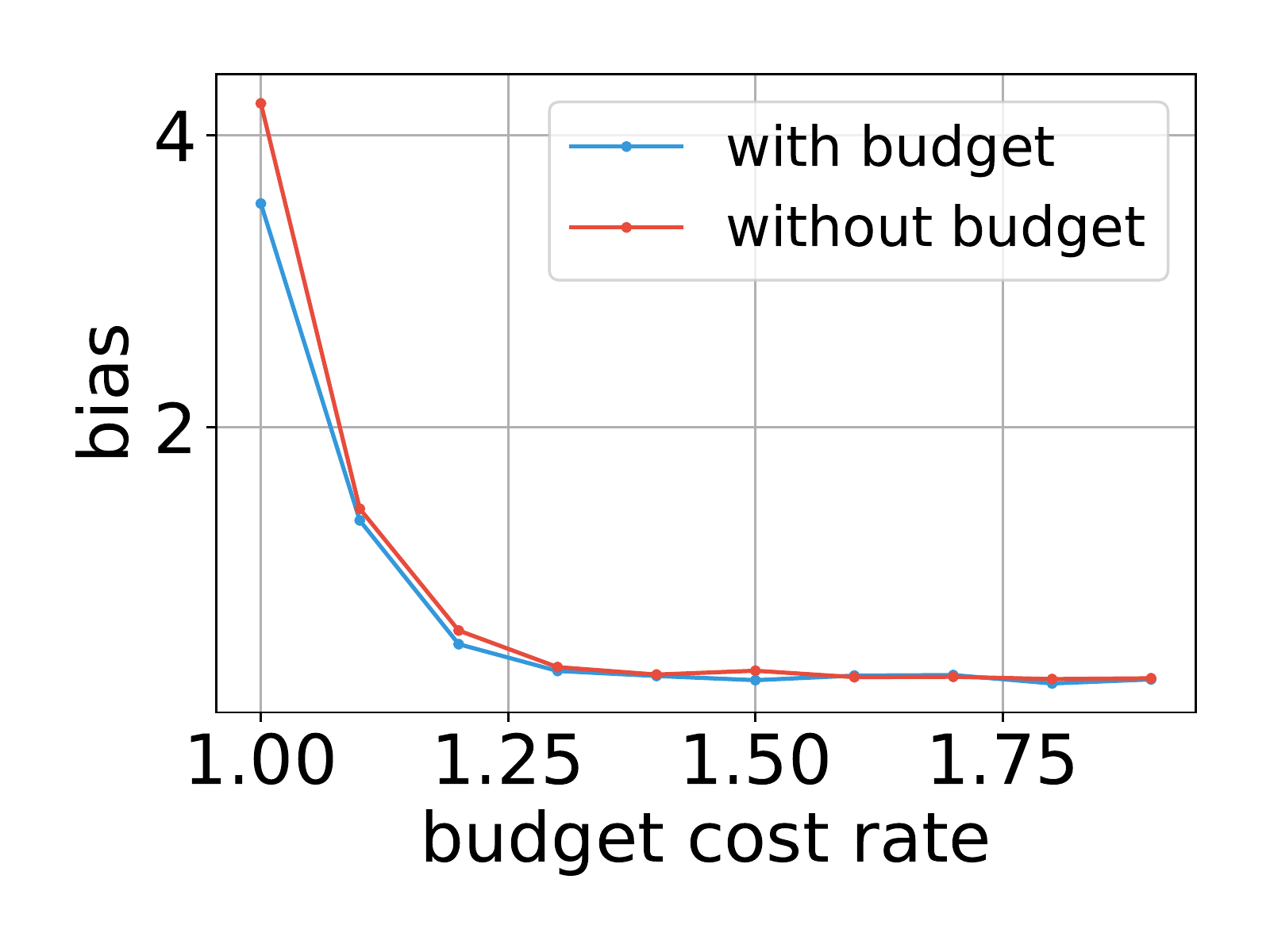}
  \end{minipage}
     \begin{minipage}{0.48\linewidth}
      \centering
      \includegraphics[scale=0.25]{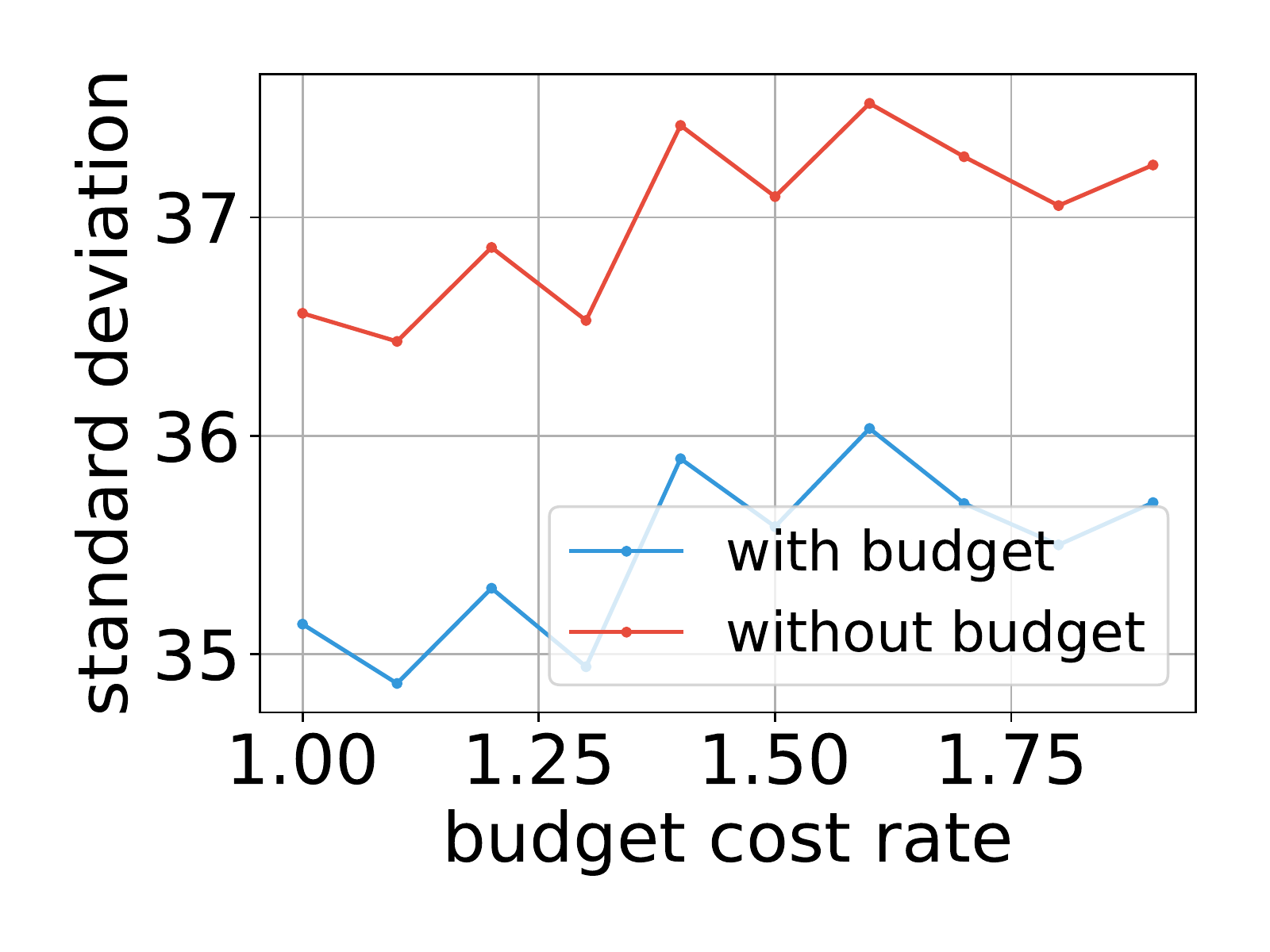}
     \end{minipage}
    \caption{\textbf{The bias and standard deviation of optimal experiment with budget constraint and without budget constraint related to budget-cost rate $r_2=\frac{\sum_i b_i}{\max_{k=1,2}\sum_{i,j} c_{ij}w_{ij}^k}$. We fix $r_1=20$.}}
\label{fig:budget}
\end{figure}

\begin{figure}[htbp]
 \centering
  \begin{minipage}{0.48\linewidth}
   \centering
   \includegraphics[scale=0.25]{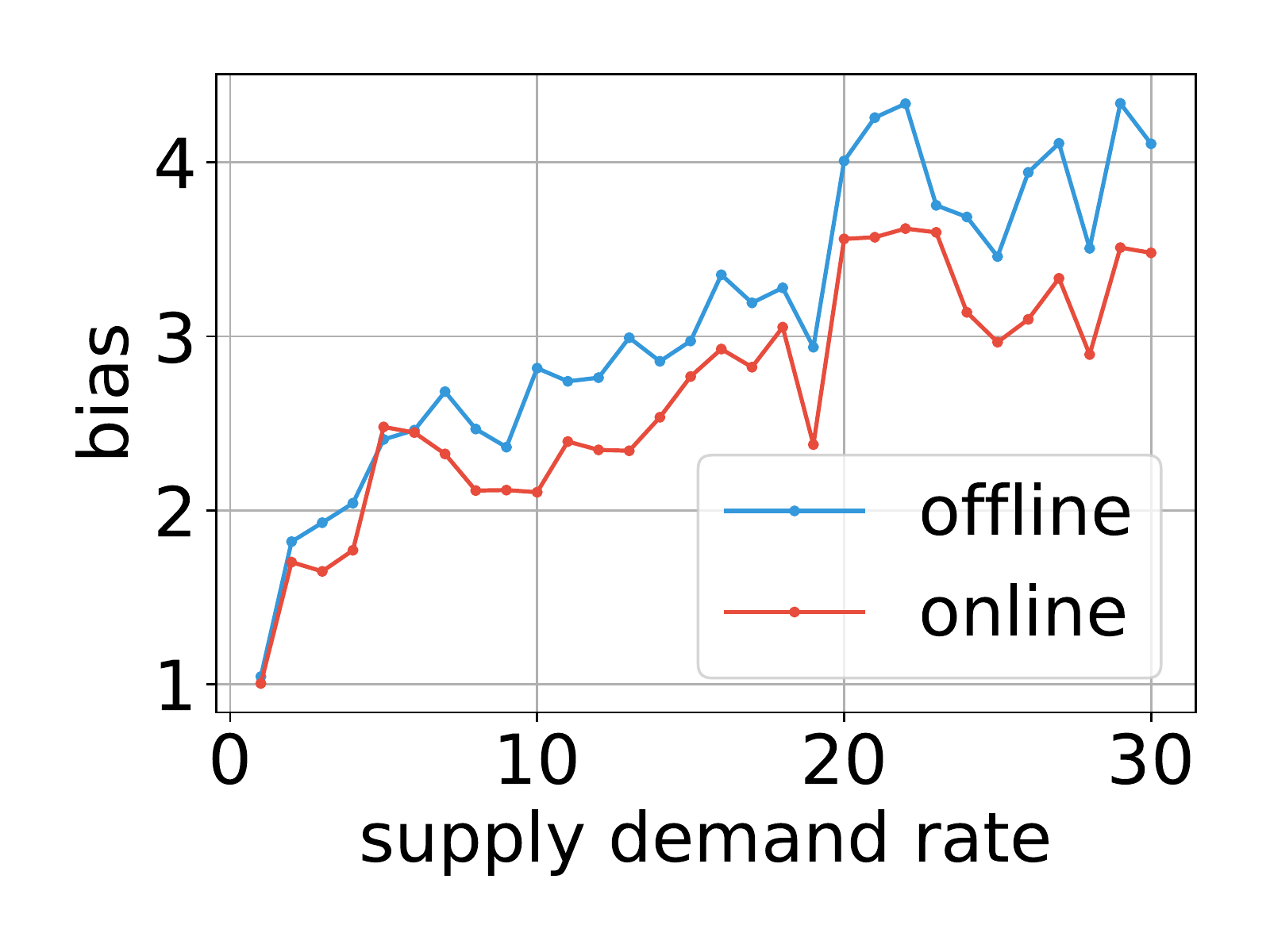}
  \end{minipage}
     \begin{minipage}{0.48\linewidth}
      \centering
      \includegraphics[scale=0.25]{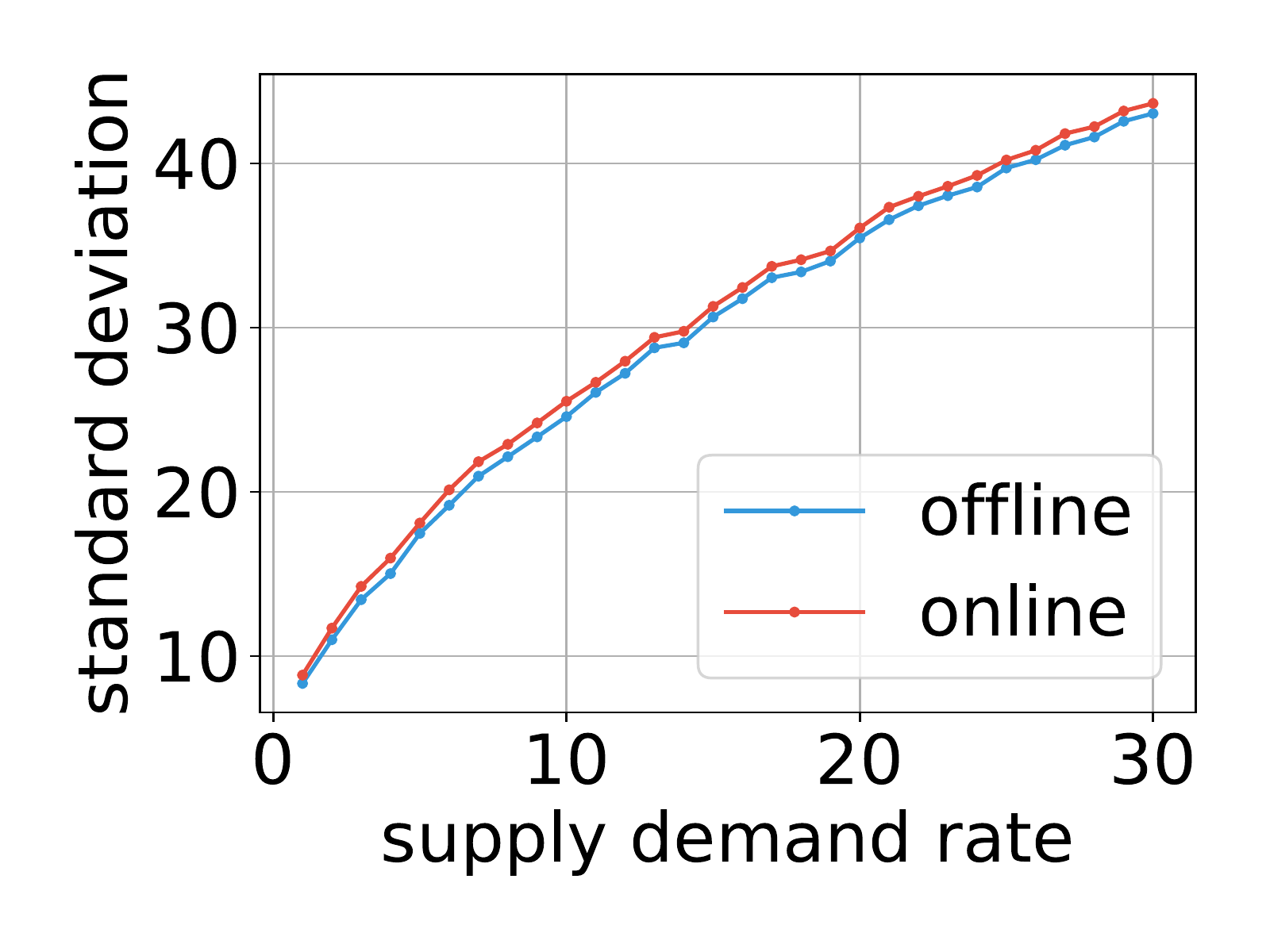}
     \end{minipage}
    \caption{\textbf{The bias and standard deviation of optimal offline experiment and online experiment related to supply-demand rate $r_1$.}}
\label{fig:online}
\end{figure}

\begin{figure}[htbp]
 \centering
  \begin{minipage}{0.5\textwidth}
   \centering
   \includegraphics[scale=0.35]{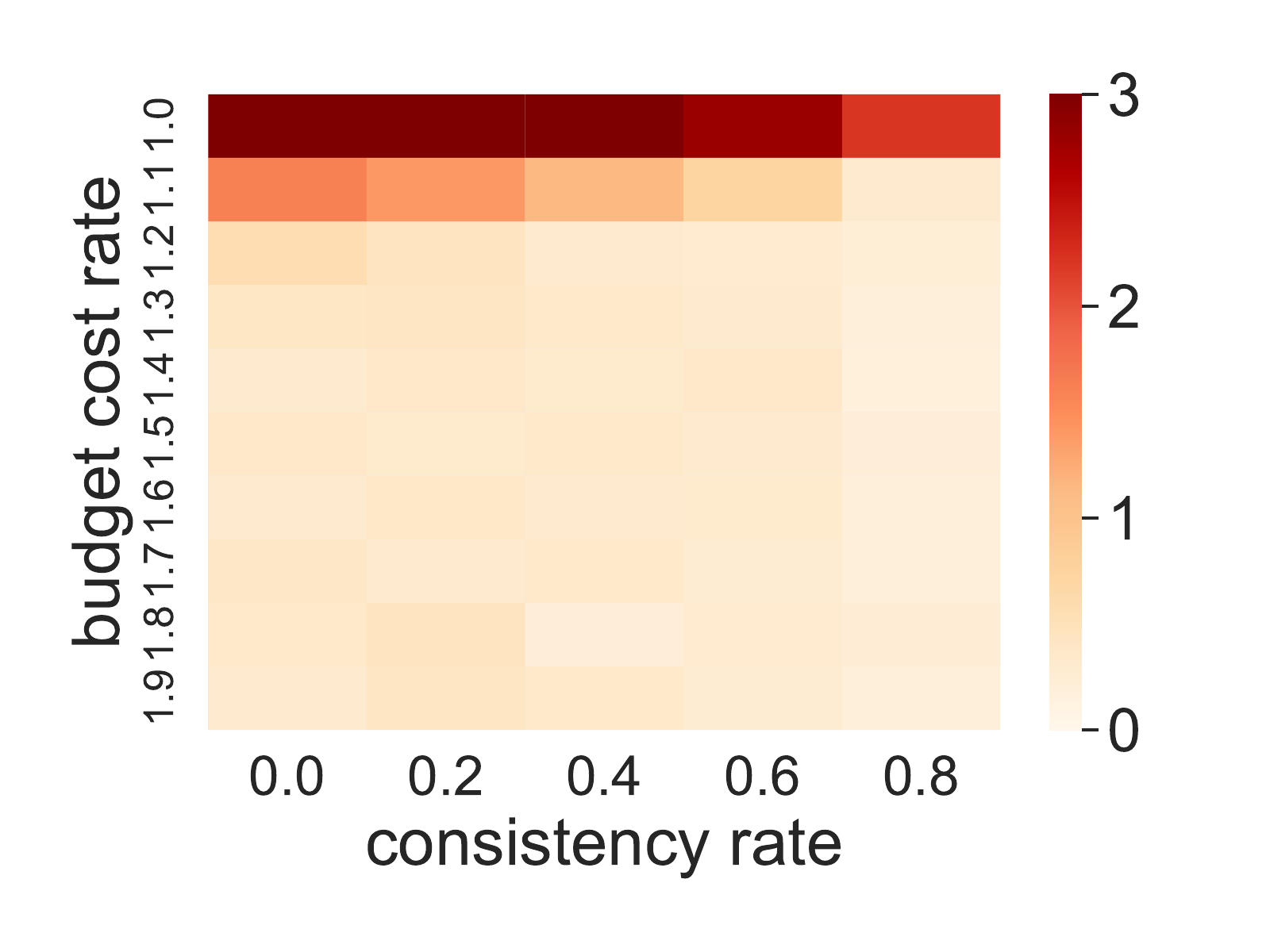}
  \end{minipage}
    \caption{\textbf{The bias of optimal experiment related to budget-cost rate $r_2$ and consistency rate $r_3=\frac{\sum_{i=1}^m \vmathbb{1}(\mathbf{w}^1_i=\mathbf{w}^0_i)}{m}$. We fix $r_1=20$.}}
\label{fig:consistent}
\end{figure}

\begin{figure}[htbp]
 \centering
  \begin{minipage}{0.48\linewidth}
   \centering
   \includegraphics[scale=0.25]{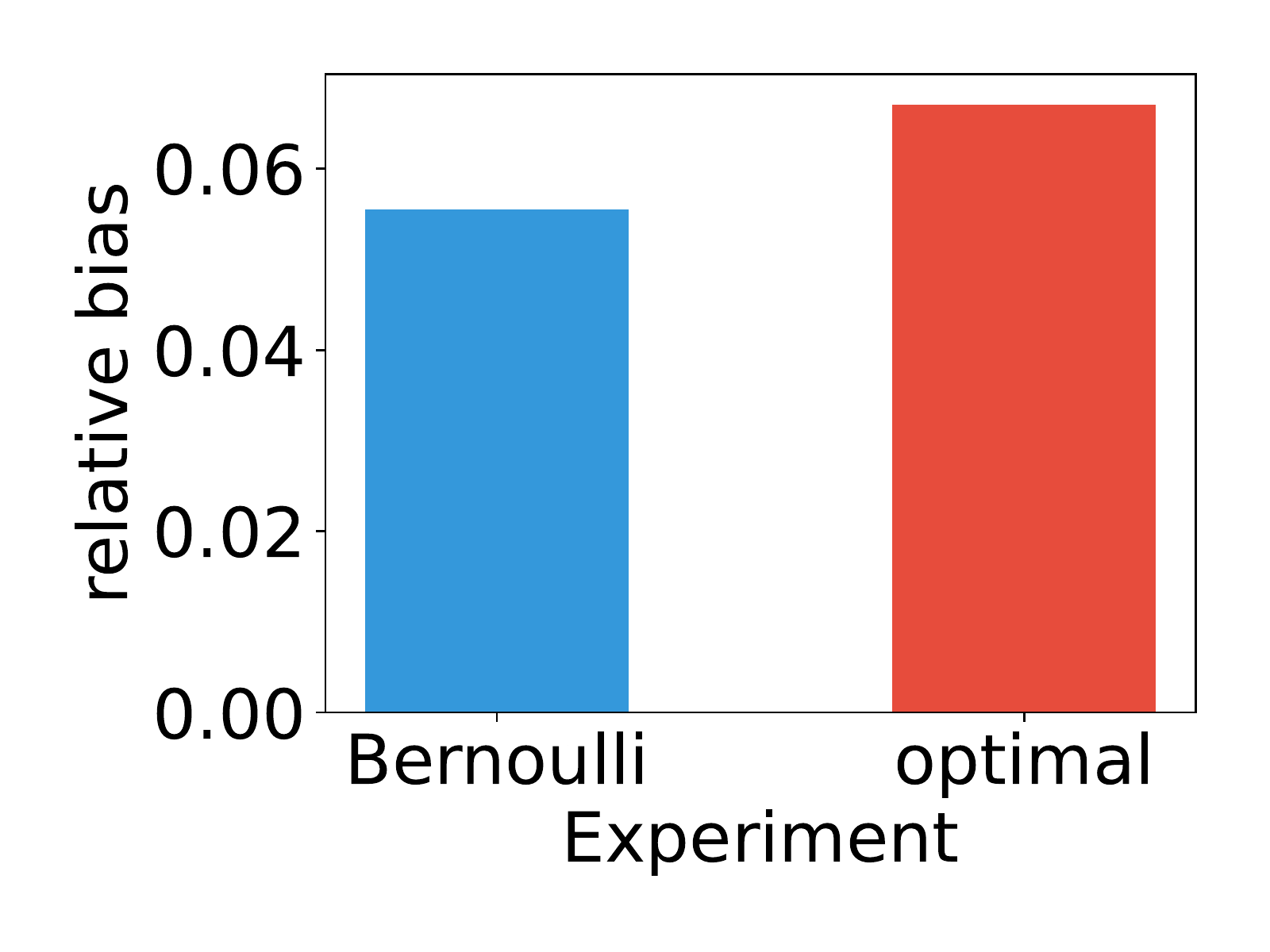}
  \end{minipage}
     \begin{minipage}{0.48\linewidth}
      \centering
      \includegraphics[scale=0.25]{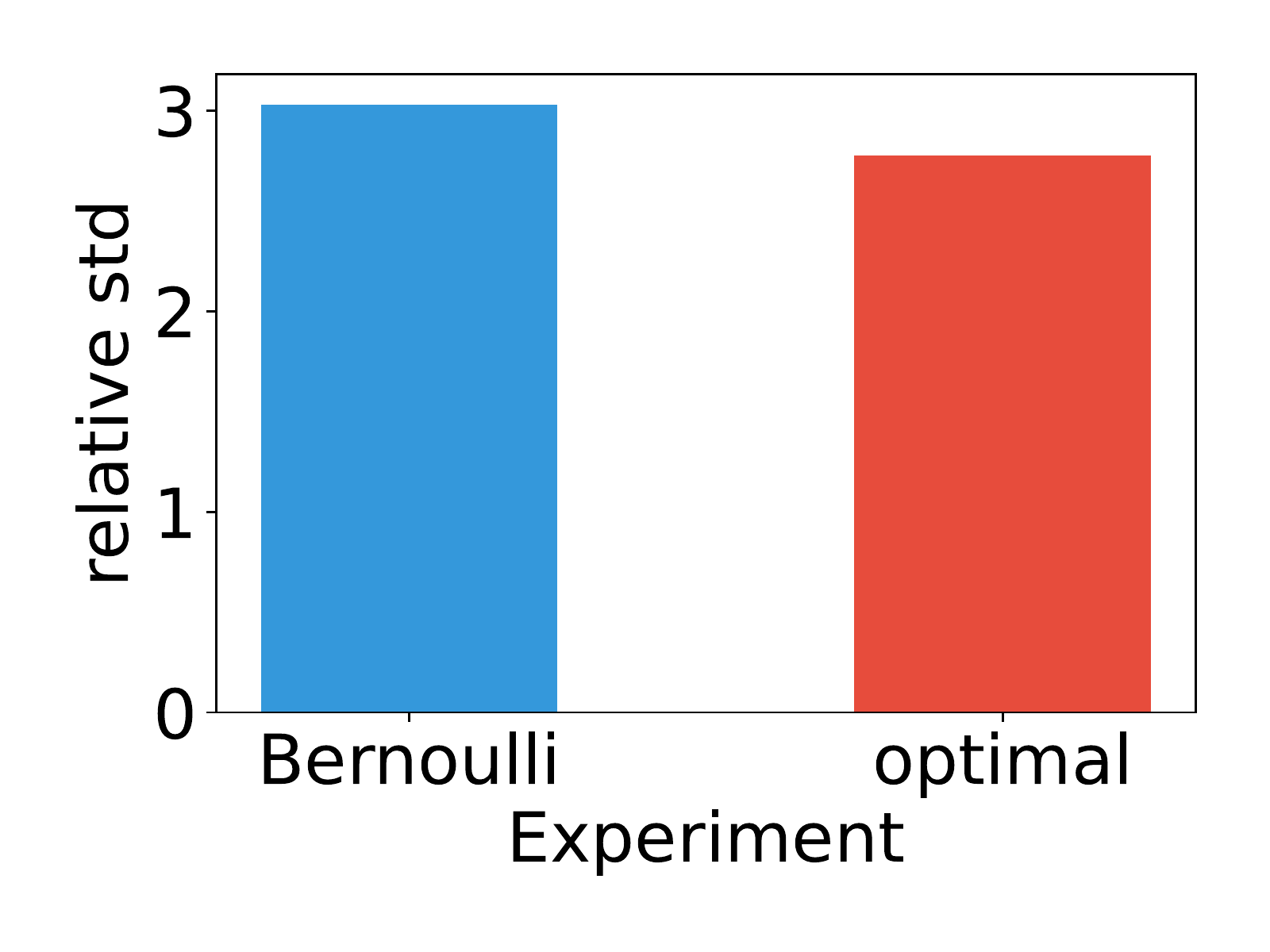}
     \end{minipage}
    \caption{\textbf{The relative bias and relative standard deviation of optimal experiment and Bernoulli randomization in real data.}}
\label{fig:real}
\end{figure}

\section{Conclusions}
In this paper, we study how to design experiments to compare two different allocations on online two-sided platforms under budget constraints. We develop a model, which includes buyers with pre-determined budgets and items with costs, with the  aim to obtain the total treatment effect (TTE), i.e., the total difference of utility under two allocation methods. Then we show how to design a feasible experiment such that the costs of each buyer do not exceed their budget. We propose estimators with small bias and nearly minimum variance. We also study the online setting and provide an online algorithm whose computation complexity is $O(m)$ times the offline setting. Finally, we test our experimental design on both synthetic data and real-world data to demonstrate the validity of our approach.

In the future, we may explore a setting where the budget can increase over time. For example, if we regard the number of rooms as the budget on Airbnb, rooms rented out will return to vacancy \cite{johari2022experimental}. This requires advanced online algorithms to handle variable budget constraints. Our current model is one-to-many, however, in many two-sided markets, both buyers and sellers have a many-to-many relationship. It may be worthwhile to impose budget constraints on both sides in these markets or conduct a two-sided experiment for improved results. Another area of research could be to examine the experiment design for strategic users who have the potential to impact the platform's costs and allocations, leading to ``interference".

\begin{acks}
Yongkang Guo, Yuqing Kong, Zhihua Zhu, and Zheng Cai were supported by the Tencent Marketing Solution RBFR202105. 

\end{acks}

\bibliographystyle{ACM-Reference-Format}
\bibliography{refs}

\clearpage
\appendix
\section{Omitted Proof}

\begin{proof}[Proof of Proposition~\ref{prop:HTestimator}]
If $p_{ij}>0$,
\begin{align*}
    &\mathbb{E}_{\mathbf{W}\sim \mathbf{X}}[\frac{o_{ij}w^1_{ij}}{p_{ij}}]\\
    =&\mathbb{E}_{\mathbf{W}\sim \mathbf{X}}[\frac{u_{ij}M(x_{ij}) w^1_{ij}}{p_{ij}}]\\
    =&\frac{u_{ij}p_{ij} w^1_{ij}}{p_{ij}}\\
    =&u_{ij}w^1_{ij}\\
\end{align*}

Similarly, $\mathbb{E}_{\mathbf{W}\sim \mathbf{X}}[\frac{o_{ij}w^0_{ij}}{p_{ij}}]=u_{ij}w^0_{ij}$.

Then we only need to show that $x_{ij}>0\Rightarrow  p_{ij}>0$. It holds because for any $M(\cdot)$ there is at least one situation for $w'_{ij}=1$, which happens when no other items are allocated to $j$.\footnote{this event happens with probability $\Pi_{i'\ne i} (1-x_{i'j})$}

Sum up over $i,j$ we will complete the proof.
\end{proof}

\begin{proof}[Proof of Theorem~\ref{thm:unbias}]

Assume variable $T_{ij}=c_{ij}$ with probability $x_{ij}$ and 0 otherwise. Since when $x_{ij}=0$, the item $i$ will have no effect on buyer $j$. Without loss of generality, we can ignore those items and assume $x_{ij}>0$ for any $i$. In the following calculation, $m$ is not the total number of items, but the number of items related to buyer $j$. In other words, $m=\sum_{i} (w_{ij}^1+w_{ij}^0)$. By Hoeffding's inequality,

$$\Pr(\sum_{i=1}^m T_{ij}-\mathbb{E}[\sum_{i=1}^m T_{ij}]\ge t)\le \exp(-\frac{2t^2}{\sum_{i=1}^m c_{ij}^2}).$$

Let $\Sigma(i)$ be the index of $i$ under random permutation and $\lceil \frac{h}{lx_0}\rceil=T$. We can get 
\begin{footnotesize}
        
\begin{align*}
   &\frac{p_{ij}}{x_{ij}}=\Pr\left(M(W)_{ij}=1|W_{ij}=1\right)\\
   =&1-\sum_{k=1}^m\frac{1}{m}\Pr\left(\sum_{\Sigma(i')< \Sigma(i)} (T_{i'j}-\mathbb{E}\left[ T_{i'j}\right])\ge B_j-\sum_{\Sigma(i')< \Sigma(i)} x_{i'j}c_{i'j}-c_{ij}\Bigg|\Sigma(i)=k\right)\\
   \ge& 1-\sum_{k=1}^m\frac{1}{m}\Pr\left(\sum_{\Sigma(i')< \Sigma(i)} (T_{i'j}-\mathbb{E}\left[T_{i'j}\right])\ge \sum_{\Sigma(i')\ge \Sigma(i)} x_{i'j}c_{i'j}-c_{ij}\Bigg|\Sigma(i)=k\right)\\
   \ge& 1-\sum_{k=1}^m\frac{1}{m}\Pr\left(\sum_{\Sigma(i')< \Sigma(i)} (T_{i'j}-\mathbb{E}\left[ T_{i'j}\right])\ge (m-k)lx_0-h\Bigg|\Sigma(i)=k\right)\\
   \ge& 1-\frac{T}{m}-\sum_{k=T+1}^m\frac{1}{m}\exp( \frac{-2(klx_0-h)^2}{\sum_{i'} c_{i'j}^2})\\
   \ge& 1-\frac{T}{m}-\sum_{k=T+1}^m\frac{1}{m}\exp( \frac{-2(klx_0-h)^2}{mh^2})\\
   \ge& 1-\frac{T}{m}-\sum_{k=T+1}^m\frac{1}{m}\exp( \frac{-2(k-T)^2l^2x_0^2}{mh^2})\\
   \ge& 1-\frac{T+m^{2/3}}{m}-\sum_{k=\lceil T+m^{2/3} \rceil}^m\frac{1}{m}\exp( \frac{-2(k-T)^2l^2x_0^2}{mh^2})\\
   \ge& 1-\frac{T+m^{2/3}}{m}-\sum_{k=\lceil T+m^{2/3} \rceil}^m\frac{1}{m}\exp( \frac{-2m^{1/3}l^2x_0^2}{h^2}).\\
   \ge& 1-\frac{T+m^{2/3}}{m}-exp( \frac{-2m^{1/3}l^2x_0^2}{h^2})
\end{align*}
\end{footnotesize}

The average bias
\begin{align*}
   &\frac{1}{m}|\mathbb{E}(\hat{\tau}(\mathbf{O}))-\tau|\\
   =&\frac{1}{m}\mathbb{E}\left|\sum_{i,j} \frac{o_{ij}(w^1_{ij}-w^0_{ij})}{x_{ij}}\left(1-\frac{p_{ij}}{x_{ij}}\right)\right|\\
   \le& \max_{i,j:x_{ij}>0} \left|\frac{u_{ij}(w^1_{ij}-w^0_{ij})}{x_{ij}}\right|\max_{i,j:x_{ij}>0}\left(1-\frac{p_{ij}}{x_{ij}}\right).\\   
\end{align*}

Thus when $m\rightarrow\infty$, $\frac{1}{m}|\mathbb{E}(\hat{\tau}(\mathbf{O}))\le O(m^{-1/3})$ and we complete the proof.

\end{proof}

\begin{proof}[Proof of Proposition~\ref{prop:varfullbudget}]
\begin{footnotesize}
    \begin{align*}
    &\mathrm{Var}(\hat{\tau}(\mathbf{O}))\\
    =&\mathrm{Var}\left(\sum_{i,j}\frac{o_{ij}(w_{ij}^1-w_{ij}^0)}{x_{ij}}\right)\\
    =&\sum_{i,j} \mathrm{Var}\left(\frac{o_{ij}(w_{ij}^1-w_{ij}^0)}{x_{ij}}\right)\\
    &+2\sum_{i,j\ne i',j'}Cov\left(\frac{o_{ij}(w_{ij}^1-w_{ij}^0)}{x_{ij}},\frac{o_{i'j'}(w_{i'j'}^1-w_{i'j'}^0)}{x_{i'j'}}\right)\\
    =&\sum_{i,j} \mathbb{E}_{u_{ij}}\left[x_{ij}\left(\frac{u_{ij}(w_{ij}^1+w_{ij}^0)}{x_{ij}}-\mu_{ij}\right)^2+(1-x_{ij})\mu_{ij}^2(w_{ij}^1+w_{ij}^0)\right]\\
    &-2\sum_{i,j,i',j'}Cov\left(\frac{o_{ij}w_{ij}^1}{x_{ij}},\frac{o_{i'j'}w_{i'j'}^0}{x_{i'j'}}\right)\\
    =&\sum_{i,j} \mathbb{E}\left[\frac{u^2_{ij}(w_{ij}^1+w_{ij}^0)}{x_{ij}}-(w_{ij}^1+w_{ij}^0)\mu_{ij}^2\right]\\
    &+2\sum_{i,j, j'}\mathbb{E}_{u_{ij},u_{ij'}}[u_{ij}u_{ij'}w_{ij}^1w_{ij'}^0]\\
    &=\sum_{i,j}\frac{(\mu^2_{ij}+\sigma_{ij}^2)(w_{ij}^1+w_{ij}^0)}{x_{ij}}-\sum_{i,j} (w_{ij}^1+w_{ij}^0)\mu_{ij}^2+2\sum_{i,j,j'}\mu_{ij}\mu_{ij'}w_{ij}^1w_{ij'}^0\\
\end{align*}
\end{footnotesize}
\end{proof}

\begin{proof}[Proof of Proposition~\ref{prop:varlimitbudget}]
Suppose $p_{ij}\in [(1-\epsilon_0)x_{ij},x_{ij}]$.
\begin{footnotesize}
\begin{align*}
    MSE(\hat{\tau}(\mathbf{O}))&=\mathbb{E}\left[\left(\sum_{i,j}\frac{o_{ij}(w_{ij}^1-w_{ij}^0)}{x_{ij}}-\mu_{ij}(w_{ij}^1-w_{ij}^0)\right)^2\right]\\
    &=\mathbb{E}\left[\left(\sum_{i,j}(\frac{o_{ij}}{x_{ij}}-\mu_{ij})w_{ij}^1-(\frac{o_{ij}}{x_{ij}}-\mu_{ij})w_{ij}^0\right)^2\right]\\
    &=\sum_{i,j} \mathbb{E}\left[\left((\frac{o_{ij}}{x_{ij}}-\mu_{ij})w_{ij}^1\right)^2\right]+\mathbb{E}\left[\left((\frac{o_{ij}}{x_{ij}}-\mu_{ij})w_{ij}^0\right)^2\right]\\ +&2\sum_{x_{ij}>0,x_{i'j'}>0,i,j\ne i',j'}\mathbb{E}\left[\left((\frac{o_{ij}}{x_{ij}}-\mu_{ij})w_{ij}^1\right)\left((\frac{o_{i'j'}}{x_{i'j'}}-\mu_{i'j'})w_{i'j'}^1\right)\right]\\
    +&2\sum_{i,j\ne i',j'}\mathbb{E}\left[\left((\frac{o_{ij}}{x_{ij}}-\mu_{ij})w_{ij}^0\right)\left(\frac{o_{i'j'}}{x_{i'j'}}-\mu_{i'j'})w_{i'j'}^0\right)\right]\\
    -&2\sum_{i,j,i',j'}\mathbb{E}\left[\left((\frac{o_{ij}}{x_{ij}}-\mu_{ij})w_{ij}^1\right)\left(\frac{o_{i'j'}}{x_{i'j'}}-\mu_{i'j'})w_{i'j'}^0\right)\right]
\end{align*}
\end{footnotesize}

Now we calculate the five terms separately. Since the first two terms are similar, we only need to calculate one of them. For the first term
\begin{align*}
    &\sum_{i,j} \mathbb{E}\left[\left((\frac{o_{ij}}{x_{ij}}-\mu_{ij})w_{ij}^1\right)^2\right]\\
    =&\sum_{i,j} w_{ij}^1\mathbb{E}\left[\left(\frac{o_{ij}}{x_{ij}}-\mu_{ij}\right)^2\right]\\
    =&\sum_{i,j} w_{ij}^1\mathbb{E}_{u_{ij}}\left[p_{ij}\frac{u^2_{ij}}{x^2_{ij}}-2\frac{u_{ij}p_{ij}}{x_{ij}}+\mu_{ij}^2\right]\\
    =&\sum_{i,j}w_{ij}^1\left(\frac{p_{ij}(\mu_{ij}^2+\sigma_{ij}^2)}{x^2_{ij}}-\frac{2\mu_{ij}p_{ij}}{x_{ij}}+\mu_{ij}^2\right)\\
    \le &\sum_{i,j}w_{ij}^1\left(\frac{\mu_{ij}^2+\sigma_{ij}^2}{x_{ij}}-2\mu_{ij}(1-\epsilon_0)+\mu_{ij}^2\right)
\end{align*}

Due to the symmetry, the second term
\begin{small}
$$\sum_{i,j} \mathbb{E}\left[\left((\frac{o_{ij}}{x_{ij}}-\mu_{ij})w_{ij}^0\right)^2\right]\le \sum_{i,j}w_{ij}^0\left(\frac{\mu_{ij}^2+\sigma_{ij}^2}{x_{ij}}-2\mu_{ij}(1-\epsilon_0)+\mu_{ij}^2\right)$$
\end{small}

For the third and fourth terms, again we only calculate the third term
\begin{align*}
    &\sum_{i,j\ne i',j'}\mathbb{E}\left[\left(\frac{o_{ij}}{x_{ij}}-\mu_{ij})w_{ij}^1\right)\left(\frac{o_{i'j'}}{x_{i'j'}}-\mu_{i'j'})w_{i'j'}^1\right)\right]\\
    =&\sum_{i,j\ne i',j'}w_{i'j'}^1w_{ij}^1\mathbb{E}\left[\left(\frac{o_{ij}}{x_{ij}}-\mu_{ij}\right)\left(\frac{o_{i'j'}}{x_{i'j'}}-\mu_{i'j'}\right)\right]\\
    =&\sum_{i,j\ne i',j'}w_{i'j'}^1w_{ij}^1\left(\mathbb{E}\left[\frac{o_{i'j'}o_{ij}}{x_{ij}x_{i'j'}}\right]-\mu_{ij}\mu_{i'j'}\left(\frac{p_{ij}}{x_{ij}}+\frac{p_{i'j'}}{x_{i'j'}}\right)\right)\\
    &+\sum_{i,j\ne i',j'}w_{i'j'}^1w_{ij}^1\mu_{ij}\mu_{i'j'}\\
    \le &\sum_{i,j\ne i',j'}w_{i'j'}^1w_{ij}^1\left(1-2(1-\epsilon_0)+1\right)\mu_{ij}\mu_{i'j'}\\
    =& \sum_{i,j\ne i',j'} 2\epsilon_0 w_{i'j'}^1w_{ij}^1\mu_{ij}\mu_{i'j'}
\end{align*}
Due to the symmetry,
\begin{footnotesize}
\begin{align*}
&\sum_{i,j\ne i',j'}\mathbb{E}[(\frac{o_{ij}}{x_{ij}}-\mu_{ij})w_{ij}^0)(\frac{o_{i'j'}}{x_{i'j'}}-\mu_{i'j'})w_{i'j'}^0)]\\
\le &\sum_{i,j\ne i',j'} 2\epsilon_0 w_{i'j'}^0w_{ij}^0\mu_{ij}\mu_{i'j'}
\end{align*}
\end{footnotesize}

For the last term
\begin{footnotesize}
\begin{align*}
-&2\sum_{i,j,i',j'}\mathbb{E}\left[\left(\frac{o_{ij}}{x_{ij}}-\mu_{ij})w_{ij}^1\right)\left(\frac{o_{i'j'}}{x_{i'j'}}-\mu_{i'j'})w_{i'j'}^0\right)\right]\\
=&-\sum_{i,j,i',j'}w_{i'j'}^0w_{ij}^1\left(\mathbb{E}\left[\frac{o_{i'j'}o_{ij}}{x_{ij}x_{i'j'}}\right]-\mu_{ij}\mu_{i'j'}\left(\frac{p_{ij}}{x_{ij}}+\frac{p_{i'j'}}{x_{i'j'}}\right)+\mu_{ij}\mu_{i'j'}\right)\\
\le &\sum_{i,j,i',j'}w_{i'j'}^0w_{ij}^1\mu_{ij}\mu_{i'j'}(1+1-1-1)\\
=0
\end{align*}
\end{footnotesize}

Since $\epsilon_0\le 1$, then add up the above inequalities we complete the proof.

\end{proof}

\end{document}